\theoremstyle{plain}
\newtheorem{mythm}{Theorem}
\newtheorem{mydef}{Definition}
\newtheorem{assump}{Assumption}
\newtheorem{myprop}{Proposition}
\newtheorem{mylemma}{Lemma}
\newtheorem{mycoro}{Corollary}
\theoremstyle{definition}
\newcommand{\calN}{\mathcal{N}}
\newcommand{\calO}{\mathcal{O}}
\newcommand{\calC}{\mathcal{C}}
\newcommand{\calL}{\mathcal{L}}
\newcommand{\diag}{\mathrm{diag}}
\newcommand{\rref}[1]{\ref{#1}}
\newcommand{\eqrref}[1]{\eqref{#1}}
\title{Kalman Gradient Descent: Adaptive Variance Reduction in Stochastic Optimization}
\author{James Vuckovic (\url{james@jamesvuckovic.com})}
\begin{document}
	\maketitle
	\begin{abstract}
		We introduce Kalman Gradient Descent, a stochastic optimization algorithm that uses Kalman filtering to adaptively reduce gradient variance in stochastic gradient descent by filtering the gradient estimates. We present both a theoretical analysis of convergence in a non-convex setting and experimental results which demonstrate improved performance on a variety of machine learning areas including neural networks and black box variational inference. We also present a distributed version of our algorithm that enables large-dimensional optimization, and we extend our algorithm to SGD with momentum and RMSProp.
	\end{abstract}
	\begin{spacing}{0.1}
	\tableofcontents
	\end{spacing}
	\section{Introduction}\label{sec:intro}
	Stochastic optimization is an essential component of  most state-of-the-art the machine learning techniques. Sources of stochasticity in machine learning optimization include handling large datasets, approximating expectations, and modelling uncertain dynamic environments. The seminal work of \cite{robbins1985stochastic} showed that, under certain conditions, it is possible to use gradient-based optimization in the presence of randomness. However, it is well-known that gradient randomness has an adverse effect on the performance of stochastic gradient descent (SGD) \cite{wang2013variance}. As a result, the construction of methods to reduce gradient noise is an active field of research \cite{wang2013variance,NIPS2014_5557,grathwohl2017backpropagation,roeder2017sticking}.\newline

	Below, we propose a method using the celebrated Kalman filter \cite{Kalman1960} to reduce gradient variance in stochastic optimization in a way that is independent of the application area. Moreover, our method can be combined with existing (possibly application-specific) gradient variance-reduction methods. The specific contributions of this paper are:
	\begin{itemize}
		\item A novel framework for performing linear filtering of stochastic gradient estimates in SGD;
		\item Analysis of the asymptotic properties of the filter and a proof of convergence for the resulting optimization algorithm;
		\item Extensions of the proposed framework to modern optimization algorithms, and analysis of asymptotic filter properties in these cases;
		\item A novel, distributed variant of these algorithms to deal with high-dimensional optimization;
		\item Experiments comparing our algorithm to traditional methods across different areas of machine learning, demonstrating improved performance.
	\end{itemize}

	The remainder of this paper is organized as follows: In Section~\rref{sec:setup}, we set up the stochastic gradient descent algorithm as a linear system and construct the Kalman filter for this setup. In Section~\rref{sec:analysis} we provide a theoretical analysis of the filter and the proposed optimization algorithm. In Section~\rref{sec:extensions} we show how to extend this work and analysis to SGD with momentum and RMSProp, and propose a distributed variant of the algorithm suitable for large-scale optimization. In Section~\rref{sec:related} we connect our method to other areas of the machine learning literature, and in Section~\rref{sec:experiments} we apply these techniques to a variety of problems in machine learning. Finally, we discuss our conclusions in Section~\rref{sec:conclusion}.

	\section{Problem Setup}\label{sec:setup}
	We consider the problem  
	\[
		\min_{x\in\R^n} f(x),
	\]where $f:\R^n\to \R$ is assumed to be at least differentiable, using stochastic gradient methods. Following the notation in \cite{bottou2018optimization}, we will assume that we cannot directly observe $\nabla f(x)$, but instead we can evaluate a function $g:\R^n\times \R^k\to \R^n$ with ${x,\xi\mapsto g(x;\xi)}$ where $\xi$ is a $\R^k$-valued random variable and $g$ satisfies
	\[
		E_\xi[g(x,\xi)]=\nabla f(x).
	\]In other words, in our setup we cannot compute the gradient directly, but we can obtain a stochastic, unbiased estimate of the gradient.\newline

	Define a discrete-time stochastic process $\{x_t\}$ by
	\begin{equation}\label{eq:sgd}
		x_{t+1}=x_t-\alpha_t g(x_t,\xi_t)
	\end{equation}
	where $\{\xi_t\}$ is a sequence of i.i.d. realizations of $\xi$, $\{\alpha_t\}$ is a sequence of non-negative scalar stepsizes, and $x_0$ is arbitrary. This is a stochastic approximation of the true gradient descent dynamics $x_{t+1}=x_t-\alpha_t \nabla f(x_t)$. We will investigate how to set up this equation as a discrete-time, stochastic, linear dynamical system, and then apply linear optimal filtering to \eqrref{eq:sgd}.

	\subsection{Linear Dynamics}\label{sec:dyn}

	The update \eqrref{eq:sgd} is a linear equation in variables $x_t$ and $g(x_t,\xi_t)$. We will set up this equation in such a way that it can be filtered with linear filtering methods. Hence consider the discrete-time, stochastic, linear time-varying (LTV) system
	\begin{equation}\label{eq:sys1}
		\begin{bmatrix}
			x_{t+1} \\ g_{t+1}
		\end{bmatrix}=\begin{bmatrix}
			I & -\alpha_t \\ 0 & I
		\end{bmatrix}\begin{bmatrix}
			x_t \\ g_t
		\end{bmatrix} + w_t
	\end{equation}
	where $w_t\sim \calN(0,Q_t)$ with $Q_t=\sigma_QI_{2n\times 2n}~,\sigma_Q>0$. When working with block matrices as above, we use the convention $aI=a$ here and henceforth. Here, $g_t$ represents a hidden estimate of $\nabla f(x_t)$, not necessarily $g(x_t,\xi_t)$ as above. This state-space setup is commonly called the ``local linear model'' in time-series filtering \cite{sarkka2013bayesian}. \newline

	For the purposes of filtering, we must include a set of measurement equations for \eqrref{eq:sys1}. We propose
	\begin{equation}\label{eq:sys2}
		y_t=\begin{bmatrix}
			0  & I 
		\end{bmatrix}\begin{bmatrix}
			x_t \\ g_t
		\end{bmatrix} + v_t
	\end{equation}
	with $v_t\sim \calN(0,R_t)$ where $R_t=\sigma_{R}I_{n\times n},~\sigma_R>0$. We will use this measurement equation to model $g(x_t;\xi_t)$ by
	\begin{equation}
		g(x_t;\xi_t)= y_t = g_t + v_t
	\end{equation}
	where $g_t$ will be estimated by the Kalman filter. In this way, $g(x_t; \xi_t)$ is implicitly modelled as $g(x_t; \xi_t)\approx\nabla f(x_t)+ v_t$.\newline

	By making the following identifications
	\begin{equation}\label{eq:components}
		z_t :=\begin{bmatrix}
			x_t \\ g_t
		\end{bmatrix},~~~A_t:=\begin{bmatrix}
			I & -\alpha_t \\ 0 & I
		\end{bmatrix},~~~C_t:=\begin{bmatrix}
			0  & I 
		\end{bmatrix}
	\end{equation}
	we will often abbreviate the system \eqrref{eq:sys1},\eqrref{eq:sys2} to
	\begin{equation}\label{eq:linsys}
		\rl{
			z_{t+1} & = A_t z_t + w_t\\
			y_t & = C_t z_t + v_t
			}.
	\end{equation}

	It is important to note that the trajectories in \eqrref{eq:linsys} are not uniquely determined given $y_0$. This is due to the absence of the initial value $x_0$ in our measurement equation. In reality, we will always know $x_0$ (it is required by the algorithm) so we must simply modify $C_t$ to reflect this by setting $C_0=I_{2n\times 2n}$ and $C_t=[0~~I]$ for $t>0$, and $R_0=I_{2n\times 2n}$ and $R_t=R$ otherwise.

	\subsection{Kalman Filtering}
	We will now develop the Kalman filter for the LTV stochastic dynamical system \eqrref{eq:linsys}. We would like to compute $\wat{z}_{t|t}:=E[z_t|y_1,\dots,y_t]$ at every timestep $t$. The Kalman filter is a deterministic LTV system for computing $\wat{z}_{t|t}$ in terms of $\wat{z}_{t|t-1}=E[z_t|y_1,\dots,y_{t-1}]$ and the observation $y_t$. The filter also propagates the covariance matrix of the error measurement $P_{t|t}=E[(z_t-\wat{z}_{t|t})(z_t-\wat{z}_{t|t})^T]$ and similarly for $P_{t|t-1}$.\newline

	The Kalman filter consists of a set of auxiliary (deterministic) LTV dynamics for $t>0$ \cite{jazwinski2007stochastic}
	\begin{align}
		\wat{z}_{t|t-1}&=A_{t-1}\wat{z}_{t-1|t-1}\label{eq:kal1}\\
		P_{t|t-1}&=A_{t-1}P_{t-1|t-1}A_{t-1}^T+Q_t\label{eq:kal2}\\
		\wat{y}_t&=y_t-C_t \wat{z}_{t|t-1}\label{eq:kal3}\\
		K_t&=P_{t|t-1}C_t^T(R_t+C_tP_{t|t-1}C_t^T)^{-1}\label{eq:kal4}\\
		\wat{z}_{t|t}&=\wat{z}_{t|t-1}+K_t\wat{y}_t\label{eq:kal5}\\
		P_{t|t}&=(I-K_tC_t)P_{t|t-1}\label{eq:kal6}
	\end{align}
	with initial values $\wat{z}_{0|0} = \wat{z}_{0}$ and $P_{0|0} \equiv P_{0}$. It is well-known that the Kalman filter dynamics produce the optimal (i.e. minimum variance) linear estimate of $z_t$ \cite{jazwinski2007stochastic}.\newline

	Writing $\wat{z}_{t|t}=[\wat{x}_{t|t}^T,~\wat{g}_{t|t}^T]^T$; $\wat{g}_{t|t}$ is then the optimal linear estimate of $g_t$ given the observations $y_1,\dots,y_t$. Let us rewrite $\wat{g}_{t|t}$ using equations \eqrref{eq:kal1}-\eqrref{eq:kal6}, noting that
	\begin{equation}
		\wat{z}_{t|t-1}=A_{t-1}\wat{z}_{t-1|t-1}= A_{t-1} \begin{bmatrix}
			\wat{x}_{t-1|t-1}\\ \wat{g}_{t-1|t-1}
		\end{bmatrix} =\begin{bmatrix}
			\wat{x}_{t|t-1}\\ \wat{g}_{t-1|t-1}
		\end{bmatrix},
	\end{equation}
	by multiplying $\wat{z}_{t|t}$ by $C_t$ to obtain
	\begin{align}
		\wat{g}_{t|t} &= C_t \wat{z}_{t|t} = C_t A_{t-1}\wat{z}_{t-1|t-1} + C_tK_t (y_t - C_t A_{t-1}\wat{z}_{t-1|t-1})\\
		&= \wat{g}_{t-1|t-1} + C_tK_t(y_t - \wat{g}_{t-1|t-1})\\
		&= (I-C_t K_t) \wat{g}_{t-1|t-1} + C_t K_t y_t\\
		&= (I-\wtilde{K}_t) \wat{g}_{t-1|t-1} + \wtilde{K}_ty_t
	\end{align}
	where 
	\begin{equation}
		\wtilde{K}_t = C_t K_t = C_tP_{t|t-1}C_t^T(R_t+C_tP_{t|t-1}C_t^T)^{-1}.
	\end{equation} 
	If $P_{t|t-1}$ is a uniformly bounded, positive definite matrix, it is easy to see that $\wtilde{K}_t$ is a bounded positive definite matrix s.t. $\exists a,b>0$ s.t. $0<aI <\wtilde{K}_t<b I <I~\forall t\geq 0$. Intuitively, $K_t$ adapts depending on the uncertainty of the estimate $\wat{z}_{t|t-1}$ relative to the measurement uncertainty $R_t$.\newline

	Hence we see that $\wat{g}_{t|t}$ is an exponentially smoothed version of $g(x_t;\xi_t)$ where $\wtilde{K}_t$ is an adaptive smoothing matrix. We will use this estimate $\wat{g}_{t|t}$ as a ``better'' approximation for $\nabla f(x_t)$ than $g(x_t,\xi_t)$ in \eqrref{eq:sgd}. Writing $v_{t+1}:=\wat{g}_{t|t}$ (not to be confused with the measurement noise term from before), we will study the properties of the update
	\begin{equation}\label{eq:kgd}
		\rl{
			v_{t+1} &= (I-\wtilde{K}_t) v_t + \wtilde{K}_tg(x_t;\xi_t)\\
			x_{t+1} &= x_t -\alpha_t v_{t+1}
		}
	\end{equation}
	which we call the \textbf{Kalman gradient descent} (KGD) dynamics.
	
	\begin{algorithm}
		\caption{Kalman Gradient Descent}
		\begin{algorithmic}
			\REQUIRE $x_0,~z_0,~P_0,~g(\cdot,\cdot),~T,~\sigma_Q,~\sigma_R,~\{\alpha_t\}$
			\STATE Initialize the Kalman Filter $KF$ with $z_0,P_{0},~Q=\sigma_QI,~R=\sigma_RI,~C$
			\FOR{$t=0,\dots,T-1$}
				\STATE $dx \leftarrow g(x_t,\xi_t)$.
				\STATE Assemble $A_t$.
				\STATE Increment Kalman Filter $\wat{z}_{t|t}\leftarrow KF(dx,A_t)$.
				\STATE Extract $v_{t+1}\leftarrow \wat{g}_{t|t}$ from $\wat{z}_{t|t}$.
				\STATE $x_{t+1}\leftarrow x_t - \alpha_t v_{t+1}$.
			\ENDFOR
			\RETURN $x_{T}$
		\end{algorithmic}
		\label{alg:kgd}
	\end{algorithm}
	It is important to note that this setup is \emph{not} equivalent to the ``heavy-ball'' momentum method, even in the case that $\wtilde{K}_t\equiv\beta$ is a constant scalar. See the remarks in Appendix~\rref{app:misc} for a calculation that shows that the two methods cannot be made equivalent by a change of parameters.

	\section{Analysis}\label{sec:analysis}
	The analysis of the trajectories defined by \eqrref{eq:kgd} is broken into two components: first, we study of the filter asymptotics (stability, convergence, and robustness); and second, we study the the convergence of \eqrref{eq:kgd} to a stationary point of $f$. 

	\subsection{Filtering}
	By using the Kalman filter estimate $\wat{g}_{t|t}$ instead of $g(x_t,\xi_t)$ or indeed $\nabla f(x_t)$, we lose any \emph{a~priori} guarantees on the behaviour of the gradient estimate (e.g. boundedness). Since these guarantees are usually required for subsequent analysis, we must show that the filtered estimate is, in-fact, well-behaved. More precisely, we will show that the linear dynamics $\wat{z}_{t|t}$ are stable, have bounded error (in the $L^2$-sense), and are insensitive to mis-specified initial conditions.\newline

	The general conditions under which these good properties hold are well-studied in LTV filtering. We defer the majority of these domain-specific details to the Appendix~\rref{app:ltv} and \cite{jazwinski2007stochastic} while stating the results in-terms of our stochastic optimization setup below. We will need the following definition.

	\begin{mydef}[Stability]
		Let $m_{t+1}=F_t m_t + B_tu_t$ be an arbitrary controlled LTV system in $\R^n$. Let $\Phi(t,s)$ be the solution operator for the homogeneous equation $m_{t+1}=F_t m_t$. Such a system is called \textbf{internally asymptotically stable} if $\exists c_0,c_1\geq 0$ s.t. 
		\[
			\|\Phi(t,0)m_0\|\leq c_1\exp(-c_2 t)\|m_0\|
		\]for any $m_0\in\R^n$. A system which is internally asymptotically stable is also \textbf{BIBO stable} (bounded-input bounded-output stable), in the sense that a sequence of bounded inputs $u_{1},\dots,u_{t}$ will produce a sequence of bounded states.
	\end{mydef}
	
	We can rewrite the state estimate $\wat{z}_{t|t}$ dynamics as
	\begin{equation}\label{eq:oneline}
		\wat{z}_{t|t}= P_{t|t}P_{t|t-1}^{-1}A_{t-1}\wat{z}_{t-1|t-1} + P_{t|t}C_t^TR_t^{-1}y_t
	\end{equation}
	which is a controlled linear system with inputs $y_t$. Hence, when we refer to BIBO stability of the Kalman filter, we are saying that \eqrref{eq:oneline} is BIBO stable.\newline

	We will use a partial ordering on $n\times n$ real matrices by saying that $A<B$ iff $B-A$ is positive definite, and $A\leq B$ iff $B-A$ is positive semidefinite. In the sequel, we will also maintain the following assumption on $\alpha_t$ which is the same as in \cite{robbins1985stochastic}.

	\begin{assump}\label{assump:alphamain}
		$\{\alpha_t\}_{t=0}^\infty\subset\R$ is non-increasing, $\alpha_t>0~\forall t$, $\sum \alpha_t= \infty,~\sum \alpha_t^2<\infty$.
	\end{assump}

	\begin{mythm}[Filter Asymptotics]\label{thm:filt}
		Suppose Assumption~\rref{assump:alphamain} holds and that $\wat{z}_{t|t}$ and $P_{t|t}$ are governed by the Kalman filter equations \eqrref{eq:kal1}-\eqrref{eq:kal6}. Then:
		\begin{enumerate}[(a)]
			\item \emph{(Stability)} The filtered dynamics \eqrref{eq:oneline} are internally asymptotically stable, hence BIBO stable;
			\item \emph{(Bounded Error Variance)} If $P_0>0$ there exists $\rho\in\R_{>0}$ and $N\in\Z_{>0}$ s.t. 
			\[
				\frac{1}{\rho}I\leq P_{t|t}\leq \rho I~~~\forall t\geq N;
			\]
			\item \emph{(Robustness)} Let $P^1_{t|t},P^2_{t|t}$ be two solutions to the Kalman filter dynamics with initial conditions $P^1_0,P^2_0\geq 0$ resp. Then $\exists k_1,k_2\in\R_{\geq 0}$ s.t. 
			\[
				\|P^1_{t|t}-P^2_{t|t}\|\leq k_1 e^{-k_2 t}\|P^1_0-P^2_0\|\to 0
			\]as $t\to \infty$.
		\end{enumerate}
	\end{mythm}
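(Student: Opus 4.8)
The plan is to derive all three properties from the classical stability theory for linear time-varying Kalman filters (developed in \cite{jazwinski2007stochastic} and summarized in Appendix~\rref{app:ltv}), whose central result is that a filter which is \emph{uniformly completely controllable} and \emph{uniformly completely observable} is internally asymptotically stable, has uniformly bounded error covariance, and forgets its initial covariance exponentially fast. Concretely, I would first reduce (a), (b), (c) to the two uniform Gramian hypotheses, then verify each hypothesis for the specific matrices $A_t,C_t,Q_t,R_t$ identified in \eqrref{eq:components}. With a two-sided bound on $P_{t|t}$ in hand, claim (b) is immediate; claim (a) then follows by exhibiting the information matrix $V_t(z)=z^TP_{t|t}^{-1}z$ as a Lyapunov function for the closed-loop map in \eqrref{eq:oneline}, since the uniform bounds on $P_{t|t}$ render $P_{t|t}^{-1}$ uniformly bounded and the Riccati identity supplies the required per-step decrease; and claim (c) follows from the standard contraction estimate for the difference of two Riccati solutions, whose exponential rate $k_2$ is governed by the same observability and controllability constants.

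The controllability hypothesis is the easy half. Because $A_t$ is unipotent, the homogeneous transition operator is \(\Phi(t,s)=\begin{bmatrix} I & -\sum_{j=s}^{t-1}\alpha_j \\ 0 & I\end{bmatrix}\), so the controllability Gramian over any window already contains the term $\Phi(t,t)Q\Phi(t,t)^T=\sigma_QI$, yielding the uniform lower bound $W_c\geq\sigma_QI$ for free. The matching upper bound holds because, by Assumption~\rref{assump:alphamain}, the step sizes are non-increasing and hence bounded by $\alpha_0$, so the window sums $\sum_j\alpha_j$ and therefore $\|\Phi\|$ are uniformly bounded over any fixed-length window. Thus controllability holds with constants independent of $t$.

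The observability hypothesis is the crux and the main obstacle. Since $C_s=[0\ \ I]$ for $s>0$ and the gradient block of the dynamics is autonomous, one computes $C_s\Phi(s,t_0)=[0\ \ I]$ for every $s$, so a sliding window starting at any $t_0>0$ produces an observability Gramian whose position block vanishes identically: the per-step measurement only ever probes the gradient coordinates. The resolution must come from the modified initialization $C_0=I_{2n}$, $R_0=I_{2n}$ of Section~\rref{sec:dyn}, whose $s=0$ term contributes $C_0^TR_0^{-1}C_0=I_{2n}$, making the information on the full state positive definite at $t=0$. The plan is then to establish the covariance bounds by propagating this initial information through the Riccati recursion \eqrref{eq:kal2},\eqrref{eq:kal6} directly rather than through a sliding-window Gramian. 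Partitioning $P_{t|t}$ into $n\times n$ blocks with diagonal blocks $P^{xx},P^{gg}$ and off-diagonal $P^{xg}=(P^{gx})^T$, I would show (i) the $P^{gg}$ block obeys a decoupled scalar-type Riccati iteration converging to a bounded positive-definite limit; (ii) the cross-block $P^{xg}$ is driven down by the contraction factor $R(R+P^{gg})^{-1}<I$ against a forcing term of size $O(\alpha_t)$; and (iii) these in turn feed the $P^{xx}$ recursion.

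The delicate balance, and where I expect the real work to concentrate, is step (iii): the position block absorbs process noise $\sigma_Q$ at every step while acquiring information about $x$ only indirectly through the decaying cross-covariance $P^{xg}$. Controlling $P^{xx}$ with a uniform two-sided bound therefore hinges on quantifying the decay rate of $P^{xg}$ against the accumulation of process noise, and this is precisely where Assumption~\rref{assump:alphamain}, in particular $\alpha_t\to 0$ and $\sum\alpha_t^2<\infty$, must be invoked decisively rather than merely for the outer optimization argument. Once the uniform bound $\tfrac1\rho I\leq P_{t|t}\leq\rho I$ is secured for all $t\geq N$, the Lyapunov argument for (a) and the Riccati-difference contraction for (c) go through with constants inherited directly from $\rho$ and from the observability and controllability bounds above.
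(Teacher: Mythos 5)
Your reduction of (a)--(c) to uniform complete controllability and observability is exactly the paper's route (Lemma~\rref{lem:ctrlobs} feeding Theorems~\rref{thm:kfstab},~\rref{thm:kfbdd},~and~\rref{thm:kfrob}), and your controllability argument matches the paper's: the final Gramian term contributes $\sigma_Q I$ for free as a lower bound, and $\alpha_t\leq\alpha_0$ bounds $\|\Phi\|$ over fixed windows for the upper bound (the paper additionally uses the unit-determinant eigenvalue argument of Lemma~\rref{lem:eig} to lower-bound $\Phi\Phi^T$, which it needs later). You have also correctly spotted the crux: with $C_s=[0~~I]$ for all $s>0$, every Gramian summand equals $\sigma_R^{-1}\begin{bmatrix}0&0\\0&I\end{bmatrix}$, so any window with $t_0>0$ carries no position information and uniform complete observability fails for the system as literally written in Section~\rref{sec:dyn}.

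However, your proposed rescue --- propagating the single $t=0$ full observation through the Riccati recursion and balancing the decay of $P^{xg}$ against the accumulation of process noise in $P^{xx}$ --- cannot succeed, because part (b) is in fact false for that system. The position channel receives independent process noise $w^x_t$ with covariance $\sigma_Q I$ at every step, and $w^x_t$ never enters $g_t$ or $y_t$; hence, conditioning on all measurements, the variance of $\sum_{s<t} w^x_s$ remains exactly $t\,\sigma_Q I$, so $P^{xx}_{t|t}\geq t\,\sigma_Q I\to\infty$. The cross-covariance channel can only reduce uncertainty in the $-\sum_s \alpha_s g_s$ contribution to $x_t$, never in the unobserved random-walk noise, so no quantitative use of $\alpha_t\to 0$ or $\sum\alpha_t^2<\infty$ closes your step (iii); your Assumption~\rref{assump:alphamain} intuition is misplaced here. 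The paper's actual resolution, in step 3 of Lemma~\rref{lem:ctrlobs}, is to modify the measurement model: it assumes $C_i=I_{2n\times 2n}$ recurs \emph{at least once in every window of length $N_o$}, not only at $i=0$, so each window Gramian contains one full-rank term $\Phi^T(i,t)\Phi(i,t)$ that is uniformly bounded above and below by the iterated controllability estimates of step 2 (via Lemmas~\rref{lem:eig} and~\rref{lem:linalg}); the accompanying remark that $N_o$ is arbitrary --- possibly longer than any actual run --- is what justifies suppressing this detail in Section~\rref{sec:setup}. Without some such recurrent reobservation of $x$, your block-Riccati program can at best establish stability of the $g$-subfilter, and the two-sided bound on the full $P_{t|t}$ claimed in (b) is unattainable.
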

	\begin{proof}
		Using Lemma~\rref{lem:ctrlobs} in Appendix~\rref{app:ltv}, apply Theorems~\rref{thm:kfstab},~\rref{thm:kfbdd},~and~\rref{thm:kfrob} respectively.
	\end{proof}

	\subsection{Optimization}
	We now study the convergence properties of the KGD dynamics \eqrref{eq:kgd}. We first assume some conditions on $f$, then prove convergence. In the sequel, $E[\cdot]$ will denote expectation w.r.t the joint distribution of all $\xi_s$ that appear in the expectation.
	\begin{assump}\label{assump:regmain}
		The objective function $f:\R^n\to \R$ is $C^3$, with uniformly bounded 1st, 2nd, and 3rd derivatives. In particular $\nabla f$ is Lipschitz with constant $L$.
	\end{assump}

	\begin{assump}[\cite{bottou2018optimization}]\label{assump:gmain}
		The random variables $g(x_t;\xi_t)$ satisfy the following properties:
		\begin{enumerate}[(a)]
			\item $E[\nabla f(x_t)^T g(x_t;\xi_t)]\geq \mu E[\|\nabla f(x_t)\|^2]$ for some constant $\mu>0$ and $\forall t$;
			\item $E[\|g(x_t;\xi_t)\|^2]\leq M + M_GE[\|\nabla f(x_t)\|^2]$ for constants $M,~M_G>0$ $\forall t$.
		\end{enumerate}
	\end{assump}

	\begin{mythm}[Convergence of KGD]\label{thm:kgdconv}
		Assume that Assumptions~\rref{assump:alphamain},~\rref{assump:regmain},~and \rref{assump:gmain} hold. Then
		\[
			\liminf_{t\to \infty} E[\|\nabla f(x_t)\|^2]=0
		\]where $x_t$ evolves according to \eqrref{eq:kgd}.
	\end{mythm}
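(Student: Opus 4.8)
\section*{Proof proposal}

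The plan is to adapt the standard supermartingale argument for non-convex SGD (as in the source of Assumption~\rref{assump:gmain}) to the fact that the search direction in \eqrref{eq:kgd} is the filtered estimate $v_{t+1}$ rather than $g(x_t;\xi_t)$ itself. Let $\mathcal{F}_t=\sigma(\xi_0,\dots,\xi_{t-1})$, so that $x_t$, $v_t$, and the \emph{deterministic} gain $\wtilde{K}_t$ (its covariance recursion does not depend on the observed gradients) are all $\mathcal{F}_t$-measurable. Starting from the descent lemma supplied by the $L$-smoothness in Assumption~\rref{assump:regmain},
\[
 f(x_{t+1}) \le f(x_t) - \alpha_t \nabla f(x_t)^T v_{t+1} + \tfrac{L\alpha_t^2}{2}\|v_{t+1}\|^2,
\]
I would take the conditional expectation given $\mathcal{F}_t$. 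The key computation is that $E[v_{t+1}\mid\mathcal{F}_t] = (I-\wtilde{K}_t)v_t + \wtilde{K}_t\nabla f(x_t)$, since $E[g(x_t;\xi_t)\mid\mathcal{F}_t]=\nabla f(x_t)$. Substituting $v_t = \nabla f(x_t) + (v_t-\nabla f(x_t))$ then splits the first-order term into a clean descent part $-\alpha_t\|\nabla f(x_t)\|^2$ and a \emph{filter-bias} part $-\alpha_t\nabla f(x_t)^T(I-\wtilde{K}_t)(v_t-\nabla f(x_t))$, which is the only place the argument departs from ordinary SGD.

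Next I would control the filter error. Setting $\delta_t := v_t - \nabla f(x_{t-1})$ and using $\|\nabla f(x_{t-1})-\nabla f(x_t)\|\le L\alpha_{t-1}\|v_t\|$, the recursion \eqrref{eq:kgd} gives
\[
 \delta_{t+1} = (I-\wtilde{K}_t)\delta_t + (I-\wtilde{K}_t)\bigl(\nabla f(x_{t-1})-\nabla f(x_t)\bigr) + \wtilde{K}_t\bigl(g(x_t;\xi_t)-\nabla f(x_t)\bigr).
\]
Taking conditional second moments, the last (martingale) term is orthogonal to the rest and is bounded via Assumption~\rref{assump:gmain}(b); the uniform contraction $\|I-\wtilde{K}_t\|\le 1-a<1$, guaranteed by Theorem~\rref{thm:filt} (which makes $P_{t|t-1}$ uniformly bounded and hence $aI<\wtilde{K}_t<bI<I$), produces a geometric recursion for $E[\|\delta_t\|^2]$ driven by a term of order $M + M_G\,E[\|\nabla f(x_t)\|^2]$ together with an $O(\alpha_{t-1}^2\,E[\|v_t\|^2])$ correction. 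A parallel, purely geometric recursion bounds $E[\|v_{t+1}\|^2]$ by a geometrically-weighted history of the $E[\|\nabla f(x_s)\|^2]$ plus the constant $M$, which is what I need to handle the $\tfrac{L\alpha_t^2}{2}E[\|v_{t+1}\|^2]$ term above.

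To combine these, I would bound the bias term by Young's inequality, $\alpha_t|\nabla f(x_t)^T(I-\wtilde{K}_t)\delta_t| \le \tfrac{\eta}{2}\alpha_t\|\nabla f(x_t)\|^2 + \tfrac{(1-a)^2}{2\eta}\alpha_t\|\delta_t\|^2$ with $\eta\in(0,2)$, so that a strictly negative multiple $-(1-\tfrac{\eta}{2})\alpha_t\|\nabla f(x_t)\|^2$ of the descent survives. Forming an energy function $E[f(x_t)] + \lambda_t\,E[\|\delta_t\|^2]$ and summing over $t$, the geometric contraction of the $\delta$-recursion absorbs the $\|\delta_t\|^2$ terms, the condition $\sum\alpha_t^2<\infty$ renders the residual $M$-proportional and $\|v\|^2$-proportional contributions finite, and the $\|\nabla f\|^2$ contributions that enter only at order $\alpha_t^2$ are dominated by the surviving leading term for $t$ large (Assumption~\rref{assump:alphamain} forces $\alpha_t\to0$). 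Telescoping the bounded-below $f$ and invoking a Robbins--Siegmund / direct summation argument should yield $\sum_t \alpha_t E[\|\nabla f(x_t)\|^2] < \infty$, which with $\sum_t\alpha_t=\infty$ forces $\liminf_{t\to\infty}E[\|\nabla f(x_t)\|^2]=0$.

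The main obstacle is precisely the coupling introduced by the filter's memory: unlike plain SGD, $v_{t+1}$ is conditionally \emph{biased} for $\nabla f(x_t)$, and both this bias and the second moment of $v_{t+1}$ depend on the whole gradient history through the recursion, generating cross-time terms. The delicate point will be choosing the Young parameter $\eta$ and the weight $\lambda_t$ so that the order-one (non-$\alpha_t$-scaled) noise that the contraction injects into $\delta_t$ is correctly balanced against the order-$\alpha_t$ descent and the square-summable step sizes; the geometric decay of $I-\wtilde{K}_t$ furnished by Theorem~\rref{thm:filt} is exactly the ingredient that makes the accumulated bias summable and thus lets the standard conclusion go through.
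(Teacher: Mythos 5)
Your overall architecture (descent lemma, control of the filter-induced bias, summation against $\sum\alpha_t^2<\infty$, conclusion from $\sum\alpha_t=\infty$) matches the paper's, but the step where you control the bias has a genuine gap that I do not think your Lyapunov/Young's-inequality device can close. The driving noise in your $\delta$-recursion is the term $\wtilde{K}_t(g(x_t;\xi_t)-\nabla f(x_t))$, whose conditional second moment is of order one (it is bounded by, but also in general bounded \emph{below} by, a fixed multiple of the gradient noise variance; by the orthogonality you invoke, $E[\|\delta_{t+1}\|^2]\geq a^2E[\|g(x_t;\xi_t)-\nabla f(x_t)\|^2]$). Hence $E[\|\delta_t\|^2]$ does \emph{not} tend to zero --- the filter has a nonvanishing stationary error variance --- and after Young's inequality your bias contributes $\sum_t \alpha_t E[\|\delta_t\|^2]\asymp\sum_t\alpha_t=\infty$. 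The energy function $E[f(x_t)]+\lambda_tE[\|\delta_t\|^2]$ cannot repair this: to telescope the $\alpha_tE[\|\delta_t\|^2]$ term against a contraction by the constant factor $(1-a)^2$ you need $\lambda_t\gtrsim\alpha_t$, but then the order-one noise injection contributes $\sum_t\lambda_t b^2M\gtrsim\sum_t\alpha_t=\infty$. Your claim that ``$\sum\alpha_t^2<\infty$ renders the residual $M$-proportional contributions finite'' is exactly where this breaks: that residual is not scaled by $\alpha_t^2$.

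The cancellation you must not discard is that the martingale part of $\delta_t$ is only $O(\alpha)$-correlated with $\nabla f(x_t)$, because $\nabla f(x_t)$ drifts by only $O(\alpha_s\|v_{s+1}\|)$ per step after the noise at time $s$ is injected; passing to $E[\|\delta_t\|^2]$ via Cauchy--Schwarz/Young forfeits this. The paper keeps the cancellation by never leaving the inner product: Proposition~\rref{prop:descentdir} runs an induction directly on $E[\nabla f(x_t)^Tv_{t+1}]$, in which the fresh-noise term is handled by Assumption~\rref{assump:gmain}(a) (contributing \emph{positively}, $+\beta\mu E[\|\nabla f(x_t)\|^2]$) and only the drift of $\nabla f$ enters the error, controlled by a second-order Taylor expansion of $\nabla f$ (Lemma~\rref{lem:taylor}, which is why the $C^3$ hypothesis with third-moment bounds on $v_t$ from Lemma~\rref{lem:bounded} appears). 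Each step therefore injects only $O(\alpha_s)$ of bias into the geometrically weighted series $\sum_{s}(1-\beta)^{t-s}[\alpha_sH^*V^2+\alpha_s^2CV^3]$, and the exchange-of-summation argument in the final theorem bounds $\sum_t\alpha_t\sum_s(1-\beta)^{t-s}\alpha_s$ by a constant times $\sum_s\alpha_s^2<\infty$. Your proposal could likely be repaired by splitting $\delta_t$ into its drift part and its martingale part and estimating $E[\nabla f(x_t)^T\tilde\delta_t]$ directly (exploiting unbiasedness plus the $O(\alpha)$ increments of $\nabla f$), but as written the Young's-inequality step produces a divergent error series and the proof does not go through.
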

	\begin{proof}
		This follows from Corollary~\rref{coro:result} and Proposition~\rref{prop:reduction} in Appendix~\rref{app:opt}.
	\end{proof}
	The proof of this result follows the same steps as in \cite{bottou2018optimization}~Theorem 4.10, but in our case we must account for the fact that the smoothed estimate $v_{t+1}$ is not a true ``descent direction''. However, if $\nabla f$ varies ``slowly enough'' then the smoothing error grows sufficiently slowly to allow for convergence. In practice, we see that the benefits of reduced variance greatly outweigh the drawbacks of using an approximate direction of descent.

	\subsection{Scalability}
	The main drawback of this algorithm is that the Kalman filter requires a series of matrix multiplications and inversions. In most implementations, these are $\calO(d^{2.807})$ for a $d\times d$ matrix \cite{CLRS}. Moreover, we require $\calO(d^2)$ extra space to store these matrices. Fortunately, there are GPU-accelerated Kalman filtering algorithms available \cite{huang2011accelerating} which improve this bottleneck. Also, in Section~\rref{sec:extensions}, we will introduce a distributed version of Algorithm~\rref{alg:kgd} that specifically deals with the issue of high-dimensionality and enables horizontal scaling of the KGD algorithm in addition to the vertical scaling described here.

	\section{Extensions}\label{sec:extensions}
	We consider two types of extensions to the KGD algorithm: extending the setup to momentum and RMSProp, and a distributed version of KGD that addresses the scalability concerns of high-dimensional matrix operations.

	\subsection{Momentum \& RMSProp}
	We study extensions of the KGD filtering setup to two modern optimization algorithms: SGD with momentum \cite{qian1999momentum} and RMSProp \cite{Tieleman2012}.\newline

	Consider the momentum update \cite{qian1999momentum}
	\begin{equation}\label{eq:mom0}
		\rl{
			x_{t+1} & = x_t + \alpha_t u_{t+1}\\
			u_{t+1} & = \mu_t u_t - (1-\mu_t) \nabla f(x_t)
		} \iff 
		\rl{
			x_{t+1} & = x_t + \alpha_t \mu_t u_t - \alpha_t(1-\mu_t) \nabla f(x_t)\\
			u_{t+1} & = \mu_t u_t - (1-\mu_t) \nabla f(x_t)
		}
	\end{equation}
	with $0<\mu_t<1$ and $\alpha_t>0$.
	Rewriting these dynamics in the style of \eqrref{eq:sys1}-\eqrref{eq:sys2} (and including the additive noise terms as before) we have a LTV representation of the momentum update which can be filtered:
	\begin{align}
		\begin{bmatrix}
			x_{t+1} \\ u_{t+1} \\ g_{t+1}
		\end{bmatrix}&=\begin{bmatrix}
			I & \alpha_t \mu_t  & -\alpha_t (1-\mu_t )\\
			0 & \mu_t  & -(1-\mu_t) \\
			0 & 0 & I
		\end{bmatrix}\begin{bmatrix}
			x_{t} \\ u_{t} \\ g_{t}
		\end{bmatrix} + w_t\label{eq:mom1}\\
		y_t & = \begin{bmatrix}
			0 & 0 & I
		\end{bmatrix}\begin{bmatrix}
			x_{t} \\ u_{t} \\ g_{t}
		\end{bmatrix} + v_t.\label{eq:mom2}
	\end{align}

	In a similar fashion to momentum, consider the RMSProp update \cite{Tieleman2012}
	\begin{equation}\label{eq:rms0}
		\rl{
			x_{t+1} & = x_t - \alpha_t \diag(\beta_t) \nabla f(x_t)\\
			r_{t+1} & = \gamma_t r_t + (1-\gamma_t)\diag(\nabla f(x_t))\nabla f(x_t)
		}
	\end{equation}
	with $\beta_t=(\sqrt{r_{t+1}}+\veps)^{-1}=((\gamma_t r_t + (1-\gamma_t)\diag(\nabla f(x_t))\nabla f(x_t))^{-1/2} + \veps)$ which we rewrite as

	\begin{align}
		\begin{bmatrix}
			x_{t+1} \\ r_{t+1} \\ g_{t+1}
		\end{bmatrix}&=\begin{bmatrix}
			I & 0  & -\alpha_t \diag(\beta_t)\\
			0 & \gamma_t  & (1-\gamma_t)\diag(\nabla f(x_t)) \\
			0 & 0 & I
		\end{bmatrix}\begin{bmatrix}
			x_{t} \\ r_{t} \\ g_{t}
		\end{bmatrix} + w_t\label{eq:rms1}\\
		y_t & = \begin{bmatrix}
			0 & 0 & I
		\end{bmatrix}\begin{bmatrix}
			x_{t} \\ r_{t} \\ g_{t}
		\end{bmatrix} + v_t\label{eq:rms2}
	\end{align}
	It is important to note that \eqrref{eq:rms1} does not correspond exactly to a realistic setup because we assume $\nabla f(x_t)$ is used to construct the transition matrix, whereas in practice we will only have access to stochastic estimates of these quantities via $g(x_t;\xi_t)$. Dealing in-detail with random transition matrices is beyond the scope of this investigation. In the experiments below, we have always used whichever gradient estimate was provided to the optimization algorithm (i.e. $g(x_t;\xi_t)$) to construct the transition matrix.

	\begin{myprop}
		Assume that Assumptions~\rref{assump:alphamain} and \rref{assump:gmain} hold. If $\mu_t=\mu$ in \eqrref{eq:mom1} and $\gamma_t=\gamma$ in \eqrref{eq:rms1} are constant, then the Kalman filter dynamics for \eqrref{eq:mom1}-\eqrref{eq:mom2} and \eqrref{eq:rms1}-\eqrref{eq:rms2} are stable, have bounded error, and are robust in the sense of Theorem~\rref{thm:filt}.
	\end{myprop}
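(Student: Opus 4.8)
The plan is to imitate the proof of Theorem~\rref{thm:filt} line for line: both stated systems are Kalman-filtered LTV systems of exactly the same type as \eqrref{eq:linsys}, so it suffices to verify the controllability and observability conditions of Lemma~\rref{lem:ctrlobs} for the augmented systems \eqrref{eq:mom1}--\eqrref{eq:mom2} and \eqrref{eq:rms1}--\eqrref{eq:rms2}, and then to invoke Theorems~\rref{thm:kfstab}, \rref{thm:kfbdd}, and \rref{thm:kfrob} for the stability, bounded-error, and robustness conclusions respectively. The first item to establish is uniform boundedness of the system data $A_t, C_t, Q_t, R_t$, with $Q_t,R_t$ also bounded below. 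This is immediate for the noise matrices since $Q_t=\sigma_Q I$ and $R_t=\sigma_R I$, and $C_t=[\,0\ 0\ I\,]$ is constant. For $A_t$, Assumption~\rref{assump:alphamain} gives $0<\alpha_t\leq\alpha_0$, and holding $\mu$ constant in $(0,1)$ makes every remaining entry of the momentum transition matrix fixed. For RMSProp the transition matrix is trajectory-dependent through $\diag(\nabla f(x_t))$ and $\diag(\beta_t)$; here Assumption~\rref{assump:regmain} bounds $\nabla f$ uniformly, which bounds $r_t$ and hence $\beta_t=(\sqrt{r_{t+1}}+\veps)^{-1}\in(0,\veps^{-1}]$, so $A_t$ is again uniformly bounded (consistent with treating $A_t$ as a known, bounded matrix at each step, per the remark preceding the statement).

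Controllability is the easy half: because $Q_t=\sigma_Q I$ has full rank, the process noise excites every coordinate, so the controllability Gramian over a single step already satisfies $\geq\sigma_Q I>0$, while boundedness of $A_t$ supplies the matching upper bound; the uniform-complete-controllability hypothesis of Lemma~\rref{lem:ctrlobs} thus holds with window length $N=1$. The substantive work is observability. Writing the momentum state as $(x_t,u_t,g_t)$ and the RMSProp state as $(x_t,r_t,g_t)$, the measurement sees only the $g$-block, so I must show that the observability Gramian over a fixed-length window is uniformly bounded below on the relevant part of the state. The $g$-block is directly observed and poses no new difficulty. The genuinely new feature is the middle block, and here constancy of $\mu$ (resp.\ $\gamma$) is exactly what is exploited: the recursions $u_{t+1}=\mu u_t-(1-\mu)g_t$ and $r_{t+1}=\gamma r_t+(1-\gamma)\diag(\nabla f(x_t))g_t$ have internal dynamics with eigenvalue $\mu<1$ (resp.\ $\gamma<1$) bounded uniformly away from $1$, so these modes are uniformly asymptotically stable, hence \emph{detectable} even at points where the coupling into the measurement degenerates (e.g.\ a vanishing gradient component in the RMSProp case). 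A drifting $\mu_t\to 1$ would have destroyed this uniformity, which is precisely why the constancy assumption is imposed.

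The main obstacle is the $x$-block, which in all three systems is structurally unobservable: $x_t$ never enters the dynamics of $u_t,r_t,g_t$ or the measurement, and its own recursion is a marginally stable integrator (eigenvalue $1$). This is the identical difficulty already present in the base system \eqrref{eq:linsys}, and my plan is to dispatch it with the identical device that Lemma~\rref{lem:ctrlobs} uses for the base case (the modification $C_0=I_{2n\times2n}$ pinning $x_0$, together with restriction of the Gramian estimates to the detectable part of the state). Consequently, the only truly new verification is that appending the stable $u$- or $r$-mode does not spoil the controllability/observability estimates, which reduces to the uniform bounds recorded in the first paragraph together with the detectability of that mode for constant $\mu,\gamma\in(0,1)$. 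Once Lemma~\rref{lem:ctrlobs} is re-established for the two augmented systems, the three claimed conclusions follow from the cited theorems exactly as in the proof of Theorem~\rref{thm:filt}.
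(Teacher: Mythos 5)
Your controllability argument is fine (indeed, taking the window $N_c=1$ gives $\calC(t,t-1)=Q_{t-1}=\sigma_Q I$ directly), but the observability half has a genuine gap, and it sits exactly where the paper does its real work. The theorems you invoke (Theorems~\rref{thm:kfstab}--\rref{thm:kfrob}) require \emph{uniform complete observability}, i.e.\ a uniform positive-definite lower bound on the Gramian $\calO(t,t-N_o)$ over a fixed window. Detectability of the $u$- or $r$-mode is strictly weaker and does not deliver this: a stable but unobserved mode leaves $\calO$ singular, and no amount of uniformity in $\mu<1$ or $\gamma<1$ repairs that. The paper's actual device is to make $C_i$ the full identity at least once per window, so that the corresponding Gramian term is $\Phi^T(i,t)\Phi(i,t)$, and then to bound \emph{that} term below uniformly. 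This lower bound is the crux, and it is obtained (steps 1--2 of Lemma~\rref{lem:ctrlobs}, reused in Lemma~\rref{lem:extobsctrl}) by showing that $A_tA_t^T$ is uniformly bounded above \emph{and} that $\det(A_t)$ is constant in $t$, so that Lemma~\rref{lem:eig} converts the spectral upper bound into a spectral lower bound, which then passes to the products $\Phi(t,t-T)\Phi(t,t-T)^T$ and to their inverses via Lemma~\rref{lem:linalg}. None of this machinery appears in your proposal; in fact, by shortening the controllability step to a one-step window you discard the very estimates on $\Phi\Phi^T$ that the observability step needs.

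Relatedly, you have misidentified the role of the constancy of $\mu$ and $\gamma$. It is not to keep a detectable mode uniformly stable; it is to make $\det(A_t)$ (equal to $\mu^n$ for momentum and $\gamma^n$ for RMSProp, by block upper-triangularity) constant so that Lemma~\rref{lem:eig} applies. The remaining content of the paper's proof, which your first paragraph does essentially capture, is the explicit verification that $\|A_{t-1}^Tv\|^2$ is uniformly bounded, using $\alpha_t\le\alpha_0$, the bound $\diag(\beta_t)\le \veps^{-1}I$, and the uniform bound on $\nabla f$ from Assumption~\rref{assump:regmain}. To repair the argument, drop the detectability discussion, restore the uniform two-sided bounds on $\Phi\Phi^T$ via the constant-determinant and Lemma~\rref{lem:eig} step, and then run the observability argument exactly as in step 3 of Lemma~\rref{lem:ctrlobs}.
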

	\begin{proof}
		Use Lemma~\rref{lem:extobsctrl} in Appendix~\rref{app:ltv} to apply Theorem~\rref{thm:kfstab},~\rref{thm:kfbdd},~\rref{thm:kfrob} respectively.
	\end{proof}
	We see that the KGD algorithm can be easily adapted to include these more sophisticated updates, with the filtering step being adjusted according to the linear state-space model being used. In fact, the principle of pre-filtering gradients before using them in optimization is applicable to most optimization algorithms, such as AdaGrad \cite{duchi2011adaptive} or Adam \cite{kingma2014adam}.

	\subsection{Distributed KGD}
	In this section, we present a distributed version of KGD that specifically targets issues with high-dimensional matrix operations. Indeed, as pointed out in Section~\rref{sec:analysis}, the Kalman filter uses matrix multiplications and inversions which have a cost of $\calO(d^{2.807})$ $d\times d$-matrices \cite{CLRS}. This makes dimensionality a very real concern, since machine learning applications may use hundreds of thousands or millions of parameters.\newline

	To combat this, we propose a ``divide and conquer'' variant of KGD (which applies \emph{mutatis mutandis} to the variations developed above) that splits the parameter vectors of dimension $d$ into $N_D$ sub-vectors of dimension $D\ll d$ and runs separate synchronous optimizers on each sub-vector. This technique enables KGD to scale horizontally on a single machine or to several machines. 
	See Algorithm~\rref{alg:distkgd} for a precise description.\newline

	\begin{algorithm}
		\caption{Distributed Kalman Gradient Descent}
		\begin{algorithmic}
			\REQUIRE $x_0,~z_0,~P^{(i)}_0,~g(\cdot,\cdot),~T,~\sigma_Q,~\sigma_R,~\alpha_t,~D$
			\STATE Compute $N_D=\lceil n/D\rceil$ and split $z_0$ into $z^{(i)}_0,~i=1,\dots,N_D$.
			\STATE Initialize $N_D$ Kalman Filters $KF^{(i)}$ with $z^{(i)}_0,P^{(i)}_{0},~Q=\sigma_QI,~R=\sigma_RI,~C$.
			\FOR{$t=0,\dots,T-1$}
				\STATE $dx \leftarrow g(x_t,\xi_t)$.
				\STATE Split $dx$ into $dx^{(i)},~i=1,\dots,N_D$.
				\FOR{$i=1,\dots,N_D$}
					\STATE Assemble $A^{(i)}_t$.
					\STATE Increment Kalman Filter $\wat{z}^{(i)}_{t|t}\leftarrow KF^{(i)}(dx^{(i)},A^{(i)}_t)$.
					\STATE Extract $v^{(i)}_{t+1}\leftarrow \wat{g}^{(i)}_{t|t}$ from $\wat{z}^{(i)}_{t|t}$.
					\STATE $x^{(i)}_{t+1}\leftarrow x^{(i)}_t - \alpha_t v^{(i)}_{t+1}$.
				\ENDFOR
				\STATE Combine $x^{(i)}_{t+1},~i=1,\dots,N_D$ into $x_{t+1}$.
			\ENDFOR
			\RETURN $x_{T}$
		\end{algorithmic}
		\label{alg:distkgd}
	\end{algorithm}

	Assuming $d$ is divisible by $D$ for simplicity, we write $d=N_D\cdot D$. In the ordinary case we have $\calO((N_D\cdot D)^\gamma)$ cost and in the distributed case we have $\calO(N_D \cdot D^\gamma)$. This is a speedup of $\calO(N_D^{\gamma -1})$, where $\gamma$ is usually $\log_2 7\approx 2.807$.\newline

	This speedup is balanced by the facts that 1. in practice, the quality of the gradient filter is decreased by the sub-vector approximation, and 2. by the constant factors involved with the operation of $N_D$ independent filtering optimizers. Hence, a good strategy is to find the largest $D$ which produces acceptable runtime for the matrix operations, and then implement the Algorithm~\rref{alg:distkgd} above.\newline

	While considerable speedups are available from this technique when used on a single machine due to the reduced dimension (e.g. the experiment in Section~\rref{subsec:mnist}), it is also clear that Distributed KGD is amenable to a synchronous implementation on multiple cores or machines. Combined with GPU-accelerated Kalman filter implementations as described in Section~\rref{sec:analysis}, this distributed KGD framework is a potential candidate for large-scale optimization.

	\section{Related Work}\label{sec:related}

	The modern gradient-based stochastic optimization landscape has several key developments beyond pure gradient descent. These include momentum methods \cite{qian1999momentum, nesterov1983method, tseng1998incremental}, AdaGrad \cite{duchi2011adaptive}, RMSProp \cite{Tieleman2012}, and Adam \cite{kingma2014adam}. In particular, exponential moving averages are used by both KGD and Adam, though we use an adaptive scaling matrix in KGD instead of a constant factor (as in Adam) to control the averaging. Recently, a general framework for adaptive methods using exponential moving averages (which includes Adam) was presented in \cite{reddi2018convergence}; KGD fits into this framework as well. 
	\newline

	There have been a few previous examples of Kalman filters being applied to stochastic gradient descent \cite{bittner2004kalman, patel2016kalman, akyildiz2018incremental}. In \cite{bittner2004kalman}, the authors instead develop dynamics for the gradient and an approximation of the Hessian, omitting the state variable. In \cite{patel2016kalman}, the authors specialize to the case of large-scale linear regression. Stopping rules for optimization algorithms using Kalman filters are derived in these cases. In \cite{akyildiz2018incremental}, the incremental proximal method (IPM) is linked to the Kalman filter.\newline

	There is an important connection between KGD and meta-learning and adaptive optimization \cite{schraudolph1999local, andrychowicz2016learning}. In \cite{schraudolph1999local}, the authors propose a set of auxiliary dynamics for the stepsize $\alpha_t$, which is treated as a  vector of $n$ individual parameters. More recently, in \cite{andrychowicz2016learning}, the authors propose the update $x_{t+1}=x_t + g_t(\nabla f(x_t), \phi)$ where the function $g_t(\nabla f(\theta_t),\phi)$ is represented by a recurrent neural network and is learned during the optimization. \newline

	Our method can be considered a type of meta-learning similar to \cite{schraudolph1999local} in which the meta-learning dynamics are those of the Kalman filter. Our setup also relates  to \cite{andrychowicz2016learning} by restricting the function $g_t(\nabla f(\theta_t),\phi)$ to be linear (albeit with a different loss function that promotes approximation rather than direct optimization). In this context, the KGD algorithm learns an optimizer in a recursive, closed form that is gradient-free.\newline

	Lastly, we note that variance reduction techniques in SGD are an active domain of research, see \cite{wang2013variance}. Techniques such as control-variates \cite{wang2013variance, grathwohl2017backpropagation} and domain-specific reduced-variance gradient estimators \cite{roeder2017sticking} could be layered on top of our techniques for enhanced performance.

	\section{Experiments}\label{sec:experiments}

	To study the various properties of the family of KGD algorithms proposed thus far, we conducted several experiments. The first is a simple optimization problem to study in detail the behaviour of the KGD dynamics, the second is a Bayesian parameter inference problem using Black Box Variational Inference (BBVI), the third is a small-scale neural network regression problem, and the last is larger-scale distributed KGD training of a MLP MNIST classifier. Experiments 2-4 are based on some excellent examples of use-cases for Autograd \cite{autograd}. All experiments use $\sigma_Q=0.01,~\sigma_R=2.0,~P_0=0.01 I$.\footnote{Software implementations of all variations of KGD as well as these experiments can be found at \url{https://github.com/jamesvuc/KGD}.}

	\subsection{2D Stochastic Optimization}\label{sec:exp1}
	We tested our filtered optimization technique on a two-dimensional optimization problem. Specifically, we minimized
	\begin{equation}\label{eq:opt1}
		f(x^1,x^2)=0.1\left(\left(x^1\right)^2 + \left(x^2\right)^2\right) + \sin\left(x^1 + 2x^2\right)
	\end{equation}
	with gradient descent \eqrref{eq:sgd}, gradient descent with momentum \eqrref{eq:mom0}, and RMSProp \eqrref{eq:rms0} using:
	\begin{enumerate}
		\item The true gradient $\nabla f(x_t)$;
		\item A noisy gradient of the form $g(x_t,\xi_t)=\nabla f(x_t)+\xi_t,~\xi_t\sim \calN(0, I_{2\times 2})$; and
		\item  The Kalman filtered noisy gradient $\wat{g}_{t|t}$.
	\end{enumerate}
	The function $f$ is approximately ``bowl-shaped'' and has many suboptimal local minima which can trap the optimization dynamics. The results for the SGD dynamics are in Figure~\rref{fig:opt1}, with the momentum and RMSProp results in Figure~\rref{fig:addtl} of Appendix~\rref{app:imagery}. \newline

	In Figure~\rref{fig:opt1}, we see that the noiseless dynamics did indeed get trapped in a local minimum. For the noisy gradient, the gradient noise actually had a ``hill-climbing'' effect, but ultimately this too was trapped by a local minimum. We see that the filtered dynamics were able to avoid the ``bad'' local minima of the first two versions and overall performed much better in terms of minimization and stability.
	
	\begin{figure}[ht!]
	\centering
		\includegraphics[scale=0.4]{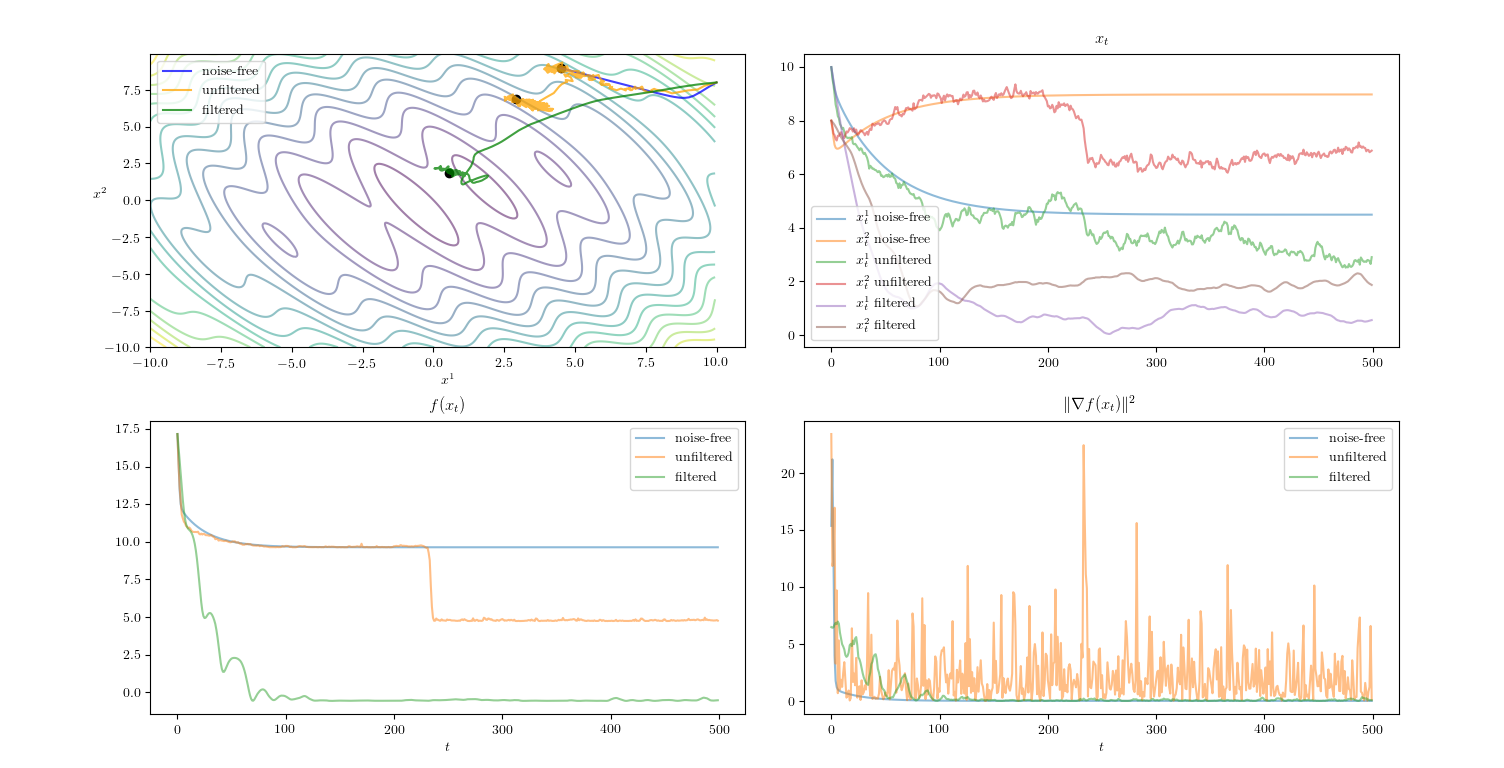}
		\caption{Optimization of the function $f(x^1,x^2)$ in \eqrref{eq:opt1} via the dynamics ordinary SGD algorithm. The stepsize was held constant at $\alpha_t=0.1$ and the algorithm was run for $T=500$ iterations.}
		\label{fig:opt1}
	\end{figure}

	\subsection{Parameter Fitting with Black Box Variational Inference}
	In this experiment, we approximated a two-dimensional target distribution with a two-dimensional Gaussian which has a diagonal covariance. Specifically we approximated
	\[
		p(z) \propto \exp\left(\phi\left(\frac{y}{1.35}\right) + \phi\left(\frac{x}{e^{y}}\right)\right),
	\]with $z=[x,y]$ where $\phi$ is the log-normal p.d.f., by the variational family
	\[
		q(z|\lambda)=\calN\left(z|\mu, \Sigma\right),~~~\lambda = (\mu,\Sigma)
	\]where $\mu\in\R^2,~\Sigma\in\mathcal{M}_{2\times 2}(\R)$ and $\Sigma$ is restricted to be diagonal.\newline

	We used Black Box Variational Inference (BBVI) \cite{ranganath2013black} to optimize a stochastic estimate of the evidence lower bound (ELBO)
	\[
		\calL(\lambda)=E_{q(\cdot|\lambda)}[\log p(x,z) - \log q(z|\lambda)]\approx \frac{1}{S}\sum_{i=1}^S [\log p(x, z_i) - \log q(z_i|\lambda)];~~~z_i\sim q(\cdot |\lambda)
	\]This objective function was maximized over $\lambda$ using with the gradients coming from backpropagation via the reparameterization trick \cite{kingma2015variational}. The gradient $\nabla_\lambda \calL$ is necessarily stochastic since it is a Monte Carlo approximation of an expectation. In our experiment, we used the extreme case of a single sample from $q(\cdot|\lambda)$ (i.e. $S=1$) to approximate the expectation. \newline 

	In Figure~\rref{fig:bbvi1}, we see a comparison between unfiltered and filtered results for the BBVI problem above. We used the RMSProp dynamics \eqrref{eq:rms0} to maximize $\calL(\lambda)$. The results illustrate the negative effect that high gradient variance can have on optimization performance.\newline

	Of course, one could improve the variance of the unfiltered dynamics by using more samples to estimate the objective; in this case, the performance of the filtered and unfiltered algorithms is similar. However, when a large number of samples is unavailable or impractical, filtering is a good way to improve performance.

	\begin{figure}[ht!]
	\centering
		\includegraphics[scale=0.5]{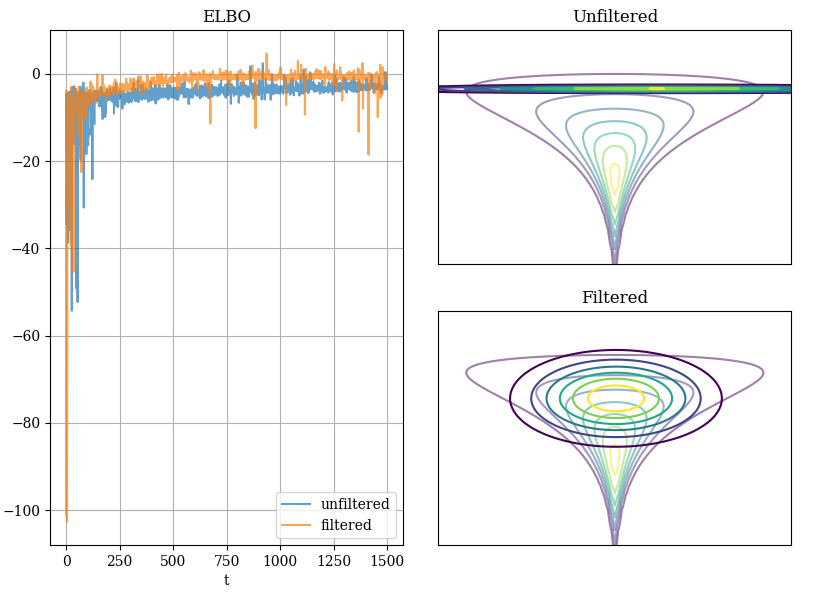}
		\caption{Using BBVI to fit a multivariate normal approximation to a non-Gaussian target. The algorithm uses the reparameterization trick \protect\cite{kingma2015variational} and the automatic differentiation Python package Autograd \protect\cite{autograd} to maximize the evidence lower-bound (ELBO) of the variational objective. The ELBO is estimated using a single sample from the variational approximation. The algorithm is run for $T=1500$ steps using RMSProp.}
		\label{fig:bbvi1}
	\end{figure}

	\subsection{Minibatch Neural Network Training}\label{sec:minbatchNN}
	In this experiment, we trained a simple multi-layer perceptron (MLP) to perform a 1-d regression task using minibatches to approximate the loss function. In particular, we used samples from $0.5\cos(x)$ which were corrupted by additive Gaussian noise and unequally sampled, and whose inputs were scaled and shifted. We used $N=80$ data points, and a batch-size of 8, with randomly sampled batches from the whole data set. We also tested two architectures, layer sizes (1,4,4,1) and (1,20,20,1), to study the effect of dimensionality on the performance of the algorithm. The former is a 33-dimensional problem, and the latter is 481-dimensional.\newline

	In Figure~\rref{fig:NNreg}, we see the improvement that filtering contributes to the optimization performance. In both cases, the filtered algorithm reached a local minimum in significantly fewer algorithm iterations (approx 7x for the small network and 2x for the large network). Both algorithms did converge to similar log-posteriors, and exhibited stability in their minima.

	\begin{figure}[ht!]
		\centering
		\begin{tabular}{c c}
		\includegraphics[scale=0.4]{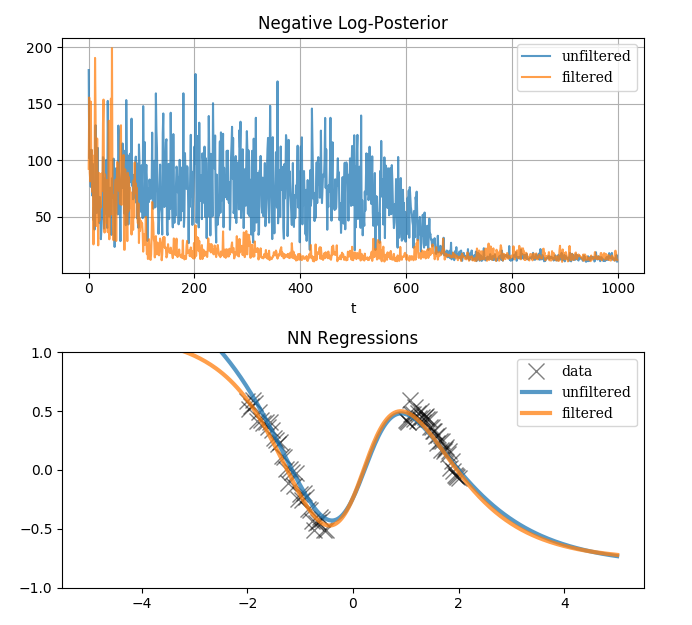} &
		\includegraphics[scale=0.4]{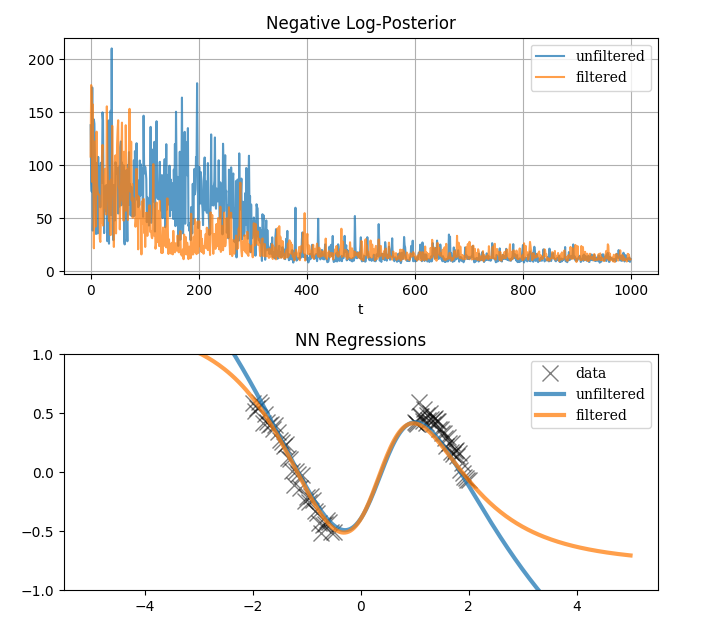}\\
		\textbf{(a)} & \textbf{(b)}
		\end{tabular}
		\caption{Comparison of MLP regression performance. All examples use 1000 iterations, a batch-size of 8 from a sample size of 80, and a stepsize $\alpha_t=0.01 \cdot 1.001^{-t}$. \textbf{(a)}: Using layer sizes (1,4,4,1). \textbf{(b)}: Using layer sizes (1,20,20,1).}
		\label{fig:NNreg}
	\end{figure}

	\subsection{MNIST Classifier}\label{subsec:mnist}
	We studied the use of the Distributed KGD algorithm from Section~\rref{sec:extensions} to classify MNIST digits \cite{lecun1998mnist} using the full-size dataset. We used a vanilla MLP classifier with layer sizes (748, 10, 10, 10). These small sizes were chosen to fit the computational limitations of the testing environment. Even with this relatively small network, training represents a 7,980-dimensional problem, over 10x the size of the problem in Section~\rref{sec:minbatchNN}. Hence, the Distributed KGD algorithm (Algorithm~\rref{alg:distkgd}) was required for this task to be completed in reasonable time.\newline

	We compared the regular and filtered Distributed RMSProp in two different regimes: small-batch (batch-size 32) and large-batch (batch-size 256). This allowed us to compare the performance of these algorithms in the presence of high- and low-noise gradients respectively. See Figure~\rref{fig:MNISTres} for the results.\newline

	We see that the Distributed KGD algorithm has equal optimization performance in the presence of low gradient variance, but significantly outperforms in the high-variance regime. In the former case, both algorithms achieve approx. 0.9 out-of-sample accuracy, which is reasonable considering the small size of the network. In the high-variance regime, the filtered optimization's accuracy plateaus significantly higher (approx 0.75) than the unfiltered version (approx 0.65) for the same number of steps. This suggests that the high variance in the gradient estimate caused the unfiltered optimization to converge to a \emph{worse} local minimum in terms of accuracy. This behaviour can also be seen in Figure~\rref{fig:opt1}.

	\begin{figure}[ht!]
		\centering
		\begin{tabular}{c c}
		\includegraphics[scale=0.3]{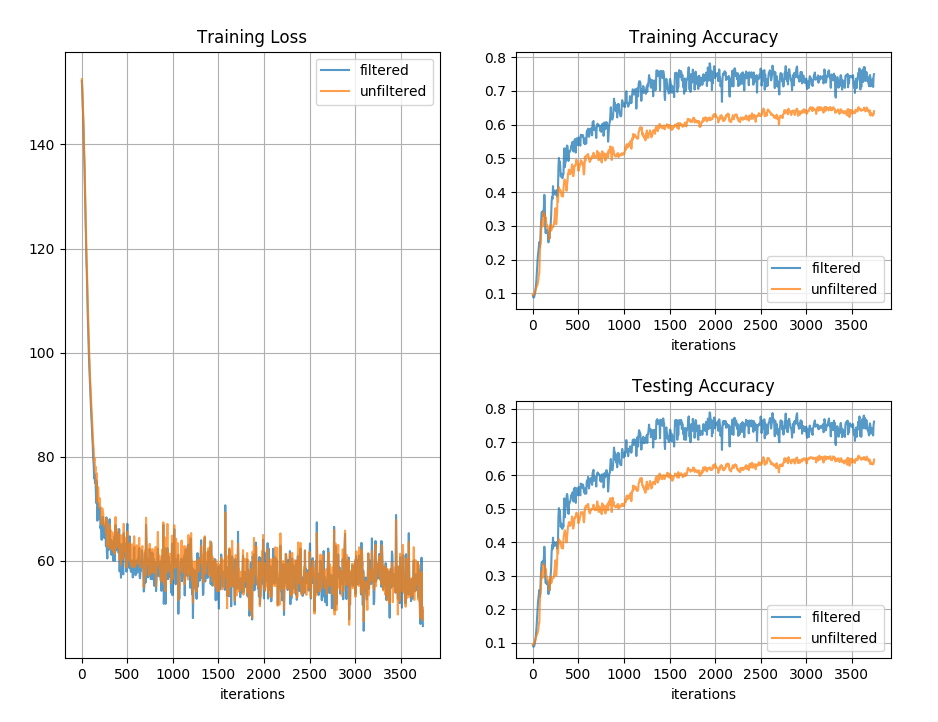} & 
		\includegraphics[scale=0.35]{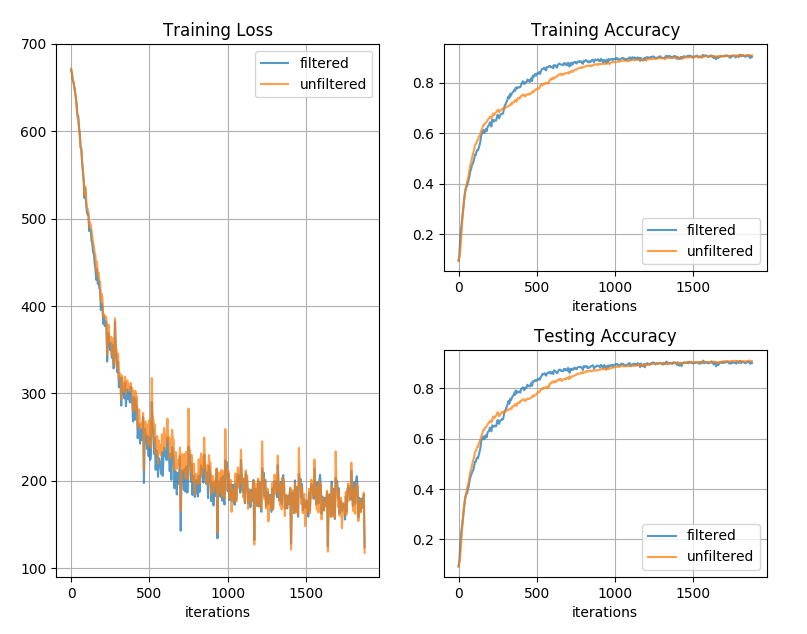}\\
		\textbf{(a)} & \textbf{(b)}
		\end{tabular}
		\caption{Comparison of MNIST classification performance. Both tests use a stepsize of 0.001, and $\gamma=0.9$ to train a (784, 10, 10, 10) network. The filtered version uses a sub-vector size of $D=50$. \textbf{(a)}: Batch-size 32 with 2 epochs (3,800 steps). \textbf{(b)}: Batch-size 256 with 8 epochs (1880 steps).}
		\label{fig:MNISTres}
	\end{figure}
	\section{Conclusions}\label{sec:conclusion}
	In this work, we have shown how to achieve superior performance in a variety of stochastic optimization problems through the application of Kalman filters to stochastic gradient descent. We provided a theoretical justification for the properties of these new stochastic optimization algorithms, and proposed methods for dealing with high-dimensional problems. We also demonstrated this algorithm's superior per-iteration efficiency on a variety of optimization and machine learning tasks.\newline

	\subsection*{Acknowledgments}
	We would like to thank Serdar Y\"uksel and Graeme Baker for their time and insightful feedback during the development this paper.


	\bibliographystyle{apacite}
	\bibliography{gradfiltering}
	\newpage

	\appendix
	\section{Proofs}
	\subsection{Filtering Proofs}\label{app:ltv}

	The Kalman filtering dynamics $\eqrref{eq:kal1}-\eqrref{eq:kal6}$ form a discrete-time LTV dynamical system, hence to study this system we will use the language and tools of LTV systems theory. In particular, the results below establish the asymptotic properties of the Kalman filter for general discrete-time LTV dynamical system

	\begin{equation}\label{eq:genlinsys}
		\rl{
			x_{t+1} & = A_t x_t + \Gamma_t w_t\\
			y_t & = C_t x_t + v_t
			};~~~x_t\in\R^m,~w_t\in\R^m,~y_t\in\R^p.
	\end{equation}
	where $w_t\sim \calN(0,Q_t)$ and $v_t\sim \calN(0, R_t)$ with $Q_t,R_t$ positive definite matrices.

	\begin{mydef}
	The for \eqrref{eq:genlinsys}, \textbf{observability matrix} is
	\[
		\calO(t, 1)=\sum_{i=1}^t \Phi^T(i,t)C_i^TR_i^{-1}C_i\Phi(i,t)
	\]and the \textbf{controllability matrix} is
	\[
		\calC(t,0)=\sum_{i=0}^{t-1} \Phi(t, i+1)\Gamma_i Q_{i}\Gamma_i^T\Phi^T(t, i+1)
	\]where $\Phi(t,s)$ is the state transition matrix of \eqrref{eq:genlinsys}, i.e. the solution to the homogeneous state equation. 
	\end{mydef}

	\begin{mydef}
	The Kalman filtered dynamics of \eqrref{eq:genlinsys} are \textbf{uniformly completely observable} if $\exists N_o\in\Z_{>0}$ and $\alpha,\beta\in\R_{>0}$ s.t. 
	\[
		0<\alpha I \leq \calO(t, t-N_o)\leq \beta I~~~\forall t\geq N_o
	\] and \textbf{uniformly completely controllable} if $\exists N_c\in\Z_{>0}$ and $\alpha',\beta'\in\R_{>0}$ s.t. 
	\[
		0<\alpha' I \leq \calC(t, t-N_c)\leq \beta' I ~~~\forall t\geq N_c.
	\]
	\end{mydef}
	Note that stability of the system's internal dynamics does not necessarily imply stability of the filter \cite{van2010nonlinear}. The below theorems, which are from \cite{jazwinski2007stochastic}, will be used to establish Theorem~\rref{thm:filt} in terms of uniform complete observability and controllability.\newline

	\begin{mythm}[Kalman Filter Stability]\label{thm:kfstab}
		If \eqrref{eq:genlinsys} is uniformly completely observable and uniformly completely controllable, then the discrete-time Kalman filter is uniformly asymptotically stable.
	\end{mythm}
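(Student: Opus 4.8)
The plan is to prove uniform asymptotic stability of the filter \eqref{eq:oneline} by exhibiting a quadratic Lyapunov function built from the inverse error covariance, following the classical Lyapunov argument for Kalman filters. The homogeneous part of \eqref{eq:oneline} has one-step transition operator $\Psi_t = (I-K_tC_t)A_{t-1} = P_{t|t}P_{t|t-1}^{-1}A_{t-1}$, and I want to show that the composed transition $\Phi_{\mathrm{filt}}(t,s)=\Psi_t\Psi_{t-1}\cdots\Psi_{s+1}$ decays exponentially. The candidate Lyapunov function is $V_t(\xi)=\xi^T P_{t|t}^{-1}\xi$.

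First I would establish that the error covariance is uniformly bounded above and below: there exist $0<\underline{p}\le\overline{p}$ and $N$ with $\underline{p}\,I\le P_{t|t}\le\overline{p}\,I$ for all $t\ge N$. The upper bound follows from uniform complete observability — accumulating $N_o$ informative measurements caps the uncertainty — and is obtained by comparing $P_{t|t-1}^{-1}$ with the observability Gramian $\calO(t,t-N_o)$ through the information-filter recursion $P_{t|t}^{-1}=P_{t|t-1}^{-1}+C_t^TR_t^{-1}C_t$. The lower bound follows from uniform complete controllability — the injected process noise encoded in $\calC(t,t-N_c)$ prevents the covariance from collapsing. With these two-sided bounds, $V_t$ is decrescent and radially unbounded: $\overline{p}^{-1}\|\xi\|^2\le V_t(\xi)\le\underline{p}^{-1}\|\xi\|^2$.

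Next I would verify the Lyapunov decrease. Using the two Riccati identities $P_{t|t}=(I-K_tC_t)P_{t|t-1}\le P_{t|t-1}$ and $P_{t|t-1}=A_{t-1}P_{t-1|t-1}A_{t-1}^T+Q_t\ge A_{t-1}P_{t-1|t-1}A_{t-1}^T$, a direct computation (valid when $A_{t-1}$ is invertible, as in the application where it is upper-triangular with unit diagonal) gives
\[
\Psi_t^T P_{t|t}^{-1}\Psi_t = A_{t-1}^T P_{t|t-1}^{-1}P_{t|t}P_{t|t-1}^{-1}A_{t-1}\le A_{t-1}^T P_{t|t-1}^{-1}A_{t-1}\le P_{t-1|t-1}^{-1},
\]
so $V_t(\Psi_t\xi)\le V_{t-1}(\xi)$ and $V$ is non-increasing along the homogeneous dynamics (for general non-invertible $A_t$ one bounds $\Phi_{\mathrm{filt}}$ over a window by the Gramians directly). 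To upgrade monotonicity to a uniform strict decrease, I would iterate this bound over one observability window $[t-N_o,t]$, retaining the discarded $C_s^TR_s^{-1}C_s$ terms; summing them recovers $\calO(t,t-N_o)\ge\alpha I$, yielding $\Phi_{\mathrm{filt}}(t,t-N_o)^T P_{t|t}^{-1}\Phi_{\mathrm{filt}}(t,t-N_o)\le P_{t-N_o|t-N_o}^{-1}-\delta I$ for some $\delta>0$ uniform in $t$. Combined with the covariance bounds, this gives the windowed contraction $V_t(\Phi_{\mathrm{filt}}(t,t-N_o)\xi)\le(1-\eta)V_{t-N_o}(\xi)$ with $\eta\in(0,1)$ uniform in $t$.

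Finally, the standard discrete-time Lyapunov theorem for time-varying systems converts this uniform, geometric, windowed decrease of a two-sided-bounded quadratic form into $\|\Phi_{\mathrm{filt}}(t,s)\|\le c_1 e^{-c_2(t-s)}$, i.e. uniform asymptotic stability. The main obstacle is the second step: deriving the two-sided covariance bounds quantitatively from the uniform complete observability and controllability hypotheses, since these require propagating the Gramian inequalities through the nonlinear Riccati recursion. The monotonicity computation in the third step is routine, and the only subtlety there is extracting a window-uniform $\delta$ so that the contraction rate does not degrade as $t\to\infty$.
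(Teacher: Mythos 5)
Your proposal is correct and is essentially the paper's own proof: the paper proves this theorem by citing Jazwinski's Theorem 7.4, whose argument is exactly the classical Lyapunov construction you reconstruct — two-sided bounds on $P_{t|t}$ from uniform complete observability and controllability, the quadratic function $V_t(\xi)=\xi^T P_{t|t}^{-1}\xi$, monotone decrease via the Riccati identities, and a windowed strict contraction extracted from the observability Gramian. The steps you flag as the main obstacles (the quantitative covariance bounds and the window-uniform decrement) are precisely the content of Jazwinski's Lemmas 7.1--7.2, which the paper invokes separately for Theorem~\rref{thm:kfbdd}.
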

	\begin{proof}
		See \cite{jazwinski2007stochastic} Theorem 7.4 p.240.
	\end{proof}

	\begin{mythm}[Kalman Filter Error Boundedness]\label{thm:kfbdd}
		If \eqrref{eq:genlinsys} is is uniformly completely observable and uniformly completely controllable, and if $P_0>0$ there exist $\alpha,\beta\in\R_{>0}$ and $N\in\Z_{>0}$ s.t. 
		\[
			\frac{\alpha}{1+\alpha\beta}I\leq P_{t|t}\leq \frac{1+\alpha\beta}{\alpha}I~~~\forall t\geq N.
		\]
	\end{mythm}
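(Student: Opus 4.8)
The plan is to pass to the \emph{information form} of the Riccati recursion, where the observability and controllability Gramians $\calO$ and $\calC$ enter directly. Applying the matrix inversion lemma to \eqref{eq:kal4} and \eqref{eq:kal6}, I would rewrite the measurement update as $P_{t|t}^{-1}=P_{t|t-1}^{-1}+C_t^{T}R_t^{-1}C_t$ and pair it with the time update $P_{t|t-1}=A_{t-1}P_{t-1|t-1}A_{t-1}^{T}+Q_t$ from \eqref{eq:kal2}. Because $Q_t,R_t>0$ and $P_0>0$, every $P_{t|t}$ and $P_{t|t-1}$ stays positive definite, so these inverses exist and I may freely invert matrix inequalities. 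The two bounds are then established separately: controllability yields the lower bound and observability the upper bound, with the complementary Gramian entering to make the constants uniform. Throughout I would relabel the four Gramian constants into a single pair, taking $\alpha$ to be the smaller of the two lower bounds and $\beta$ the larger of the two upper bounds, so that the stated symmetric constants appear.

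For the lower bound on $P_{t|t}$ I would unroll the information recursion over a backward window of length $N=\max(N_o,N_c)$ ending at $t$. Using $P_{t|t-1}\geq A_{t-1}P_{t-1|t-1}A_{t-1}^{T}$, hence $P_{t|t-1}^{-1}\leq A_{t-1}^{-T}P_{t-1|t-1}^{-1}A_{t-1}^{-1}$ (the $A_t$ being invertible for all systems considered here), iterating produces $P_{t|t}^{-1}\leq \Phi^{T}(t-N,t)P_{t-N|t-N}^{-1}\Phi(t-N,t)+\calO(t,t-N)$, in which the accumulated, transported measurement terms are exactly the observability Gramian. Uniform complete observability gives $\calO(t,t-N)\leq\beta I$, while the controllability-Gramian lower bound on the prediction covariance (a standard lemma, cf.\ \cite{jazwinski2007stochastic}) forces $P_{t-N|t-N}\geq\alpha I$, so the transported prior-information term is bounded by $\alpha^{-1}I$ over the fixed window. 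Adding the two contributions yields $P_{t|t}^{-1}\leq\frac{1+\alpha\beta}{\alpha}I$, i.e.\ $P_{t|t}\geq\frac{\alpha}{1+\alpha\beta}I$.

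For the upper bound I would invoke optimality of the Kalman filter together with observability. Since $\wat{g}_{t|t}$ (indeed all of $\wat{z}_{t|t}$) is the minimum-variance linear estimate, $P_{t|t}$ is dominated by the error covariance of \emph{any} particular linear estimator. Choosing the limited-memory estimator that reconstructs $z_t$ from only the last $N_o$ observations, uniform complete observability ($\calO\geq\alpha I$) guarantees that this reconstruction is well-posed with uniformly bounded error covariance, while the controllability upper bound $\calC\leq\beta I$ controls the residual process-noise contribution; together these give $P_{t|t}\leq\frac{1+\alpha\beta}{\alpha}I$. The bounds are asserted only for $t\geq N$ because a full window is required before the Gramian estimates apply, and the hypothesis $P_0>0$ both keeps the recursion well-defined and, via the geometric decay of the initial-condition influence, makes the constants independent of $P_0$.

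The main obstacle is the windowing. The single-step terms $C_t^{T}R_t^{-1}C_t$ and $Q_t$ need not be uniformly bounded above or below, so neither inequality can be read off one step at a time; one must aggregate over a window of length $N=\max(N_o,N_c)$ so that the \emph{summed, transported} contributions coincide with $\calO$ and $\calC$, and then apply the matrix inversion lemma carefully to recover the coupled constants $\frac{\alpha}{1+\alpha\beta}$ and $\frac{1+\alpha\beta}{\alpha}$ rather than the naive $\alpha^{-1}$ and $\beta$. The bookkeeping of the state-transition factors $\Phi$ in the transported prior term, and verifying that its influence is dominated by the freshly injected Gramian contributions, is the delicate part of the argument.
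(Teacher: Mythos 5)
The paper offers no proof of this statement: Theorem~\ref{thm:kfbdd} is imported verbatim from \cite{jazwinski2007stochastic}, and the ``proof'' in the paper is nothing more than the citation to Lemmas 7.1 and 7.2 there. Your sketch is therefore not competing with an argument in the paper but reconstructing the cited one, and in outline it matches the textbook treatment: the upper bound on $P_{t|t}$ via optimality of the Kalman filter against a limited-memory estimator built over an observability window, the lower bound via the information-form recursion $P_{t|t}^{-1}=P_{t|t-1}^{-1}+C_t^TR_t^{-1}C_t$ with the Gramians supplying uniform constants. One step, however, is circular as written. In your lower-bound argument the transported prior term $\Phi^{T}(t-N,t)\,P_{t-N|t-N}^{-1}\,\Phi(t-N,t)$ is controlled only if $P_{t-N|t-N}$ is already uniformly bounded below --- which is precisely the statement being proved --- and the ``standard lemma'' you invoke to supply $P_{t-N|t-N}\geq\alpha I$ is essentially Jazwinski's Lemma 7.2 itself. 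A direct attack also runs into the fact that the filtered covariance telescopes through the Joseph-form factors $(I-K_iC_i)$, so the accumulated process-noise terms assemble into a \emph{closed-loop} Gramian, not the open-loop $\calC(t,t-N)$ you want. Jazwinski breaks this by duality: the lower bound is obtained by applying the upper-bound lemma to the dual system, in which the roles of $\calO$ and $\calC$ (and of $Q$ and $R$) are exchanged, so that uniform complete controllability of the original system becomes uniform complete observability of the dual and the already-proved upper bound applies. With that substitution your sketch becomes the standard proof; without it, the lower-bound half does not close.
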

	\begin{proof}
		See \cite{jazwinski2007stochastic} Lemmas 7.1 \& 7.2 p.234.
	\end{proof}

	\begin{mythm}[Kalman Filter Robustness]\label{thm:kfrob}
		Suppose \eqrref{eq:genlinsys} is uniformly completely observable and uniformly completely controllable, and let $P^1_{t|t},P^2_{t|t}$ be two solutions to the Kalman filter dynamics with initial conditions $P^1_0,P^2_0\geq 0$ resp. Then $\exists k_1,k_2\in\R_{\geq 0}$ s.t. 
		\[
			\|P^1_{t|t}-P^2_{t|t}\|\leq k_1 e^{-k_2 t}\|P^1_0-P^2_0\|\to 0
		\]as $t\to \infty$.
	\end{mythm}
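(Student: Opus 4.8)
The plan is to deduce the exponential forgetting of initial conditions from the exponential stability already supplied by Theorem~\rref{thm:kfstab}, by first writing the difference of two covariance sequences as an exact \emph{bilinear} function of the filter's closed-loop transition operators. It is cleanest to work with the one-step predicted covariance $P_{t|t-1}$, whose Riccati recursion is $P_{t+1|t}=A_t(I-K_tC_t)P_{t|t-1}A_t^T+Q_t$ with $K_t=P_{t|t-1}C_t^T(R_t+C_tP_{t|t-1}C_t^T)^{-1}$; the filtered covariance is recovered from $P_{t|t}=(I-K_tC_t)P_{t|t-1}$. Since the gains and covariances are uniformly bounded for $t\geq N$ (Theorem~\rref{thm:kfbdd}), any exponential bound on $\|P^1_{t|t-1}-P^2_{t|t-1}\|$ transfers to $\|P^1_{t|t}-P^2_{t|t}\|$ up to a constant factor, so it suffices to bound the predicted difference $\Delta_t:=P^1_{t|t-1}-P^2_{t|t-1}$.

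The heart of the argument is the algebraic identity. Writing $\bar{A}^i_t:=A_t(I-K^i_tC_t)$ for the closed-loop matrix attached to solution $i=1,2$ and $\Sigma^i_t:=R_t+C_tP^i_{t|t-1}C_t^T$, I claim
\[
	P^1_{t+1|t}-P^2_{t+1|t}=\bar{A}^1_t\,\Delta_t\,(\bar{A}^2_t)^T .
\]
This is verified by substituting $P^i_{t+1|t}=A_tP^i_{t|t-1}A_t^T-A_tP^i_{t|t-1}C_t^T(\Sigma^i_t)^{-1}C_tP^i_{t|t-1}A_t^T+Q_t$ and simplifying the difference of the two quadratic correction terms; the cross terms collapse after using $C_tP^i_{t|t-1}C_t^T=\Sigma^i_t-R_t$ and the gain identity $(I-K^i_tC_t)P^i_{t|t-1}C_t^T=K^i_tR_t$ for \emph{both} solutions. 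Iterating from $t=0$ then gives
\[
	\Delta_t=\Phi^1(t,0)\,\Delta_0\,\big(\Phi^2(t,0)\big)^T,\qquad \Phi^i(t,s):=\bar{A}^i_{t-1}\bar{A}^i_{t-2}\cdots\bar{A}^i_{s},
\]
where $\Phi^i$ is precisely the state-transition operator of the closed-loop filter built from the $i$-th covariance sequence. The one-step indexing mismatch between the given $P^i_0=P^i_{0|0}$ and $P^i_{1|0}$ is handled by the bounded affine prediction map and absorbed into the final constant.

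Next I would invoke stability. Under uniform complete observability and controllability, Theorem~\rref{thm:kfstab} guarantees that the Kalman filter built from either initial condition is uniformly asymptotically stable, so the filtered-form closed loop $(I-K^i_tC_t)A_{t-1}$ of \eqrref{eq:oneline} has transition matrix decaying like $c_1e^{-c_2(t-s)}$. Since $A_t$ is block upper-triangular with identity diagonal blocks it is invertible with $A_t,A_t^{-1}$ uniformly bounded (as $\alpha_t\leq\alpha_0$), and the predicted-form operator $\Phi^i$ differs from the filtered-form one only by uniformly bounded pre- and post-factors $A_{t-1}$ and $I-K^i_sC_s$, so $\|\Phi^i(t,0)\|\leq c_1'e^{-c_2 t}$ as well. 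Although Theorem~\rref{thm:kfbdd} assumes $P_0>0$, uniform complete observability forces $P^i_{t|t}$ to become positive definite and uniformly bounded for $t\geq N$ regardless of whether $P^i_0>0$ or merely $P^i_0\geq0$, so both bounds hold \emph{uniformly in the admissible initial conditions} with the finite transient absorbed into $k_1$. Submultiplicativity then yields $\|\Delta_t\|\leq\|\Phi^1(t,0)\|\,\|\Delta_0\|\,\|\Phi^2(t,0)\|\leq (c_1')^2e^{-2c_2 t}\|\Delta_0\|$, and passing back to the filtered covariances gives the claim with $k_2=2c_2$ and $k_1$ a composite constant.

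The main obstacle is the bilinear identity of the second paragraph: in the scalar case it is a one-line computation, but in the matrix case collapsing $P^1C^T(\Sigma^1)^{-1}CP^1-P^2C^T(\Sigma^2)^{-1}CP^2$ into the clean form $\bar{A}^1\Delta(\bar{A}^2)^T$ requires a careful, fully non-commutative bookkeeping via the matrix-inversion lemma and the gain identities, keeping track of which solution's gain multiplies on the left versus the right. A subtle consistency point worth checking is that the apparently asymmetric right-hand side $\bar{A}^1\Delta(\bar{A}^2)^T$ is in fact symmetric; this is not automatic for arbitrary matrices but is forced here by the gain identities, which is exactly why the identity is an \emph{equality} rather than merely a bound. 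The secondary obstacle is ensuring the stability bound is genuinely uniform over all $P^i_0\geq0$ rather than only for the stationary solution, which is precisely what the uniform (rather than pointwise) observability and controllability hypotheses deliver.
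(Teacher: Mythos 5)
The paper does not actually prove this statement---its ``proof'' is a bare citation to Jazwinski, Theorem 7.5---and your argument is a correct, self-contained reconstruction of exactly that standard argument: the bilinear difference identity $(I-K^1_tC_t)P^1_{t|t-1}-(I-K^2_tC_t)P^2_{t|t-1}=(I-K^1_tC_t)\bigl(P^1_{t|t-1}-P^2_{t|t-1}\bigr)(I-K^2_tC_t)^T$ (which does collapse as you claim, via the gain identity $(I-K^i_tC_t)P^i_{t|t-1}C_t^T=K^i_tR_t$ applied once on each side, and conjugation by $A_t$ then kills the common $Q_t$ term), iterated and combined with the exponential decay of the closed-loop transition matrices furnished by uniform complete observability and controllability. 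The only point deserving care is the one you already flag yourself: Theorem~\rref{thm:kfbdd} is stated for $P_0>0$ while robustness is claimed for $P_0\geq 0$, and your observation that controllability (here $Q_t>0$) forces $P_{t|t-1}$ to be uniformly positive definite after finitely many steps, with the finite transient absorbed into $k_1$, is the correct way to close that gap.
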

	\begin{proof}
		See \cite{jazwinski2007stochastic} Theorem 7.5 p.242.
	\end{proof}

	With the conditions for the stability of the Kalman filter above, we will show that Assumption~\rref{assump:alphamain} implies uniform complete observability and controllability for \eqrref{eq:linsys}, and this implies Theorem~\rref{thm:filt}.\newline

	\begin{mylemma}\label{lem:eig}
		Let $t\mapsto M_t$ be a $n\times n$, real, positive-definite matrix-valued function. Suppose that $\det(M_t)=K~\forall t$, and suppose that $\exists \lambda^*\in\R_{>0}$ such that $\lambda^i_t\leq \lambda^*~\forall t$ and every $i$, where $\lambda^i_t$ are the eigenvalues of $M_t$. Then $\exists \lambda_*\leq \lambda^i_t~\forall t$ and every $i$. 
	\end{mylemma}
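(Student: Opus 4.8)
The plan is to exploit the elementary identity that the determinant of a positive-definite matrix equals the product of its (strictly positive) eigenvalues. Since the determinant is held fixed at $K$ and the eigenvalues are uniformly bounded above by $\lambda^*$, the product constraint forces the smallest eigenvalue to stay bounded away from zero, uniformly in $t$. The entire argument is a single algebraic estimate carried out at each fixed $t$, with the resulting bound happening to be independent of $t$.

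First I would record that positive-definiteness of $M_t$ guarantees $\lambda^i_t>0$ for all $i$ and all $t$, so that $K=\det(M_t)=\prod_{i=1}^n \lambda^i_t>0$. Fixing an arbitrary $t$ and relabelling the eigenvalues so that $\lambda^1_t=\min_i \lambda^i_t$, I would then factor the determinant as
\[
	K = \lambda^1_t \prod_{i=2}^n \lambda^i_t \leq \lambda^1_t\,(\lambda^*)^{n-1},
\]
where the inequality applies the hypothesis $\lambda^i_t\leq\lambda^*$ to each of the remaining $n-1$ factors. Rearranging gives
\[
	\lambda^1_t \geq \frac{K}{(\lambda^*)^{n-1}}.
\]
Since every eigenvalue dominates the smallest one, the same bound holds for all $\lambda^i_t$, so setting $\lambda_* := K/(\lambda^*)^{n-1}>0$ yields the desired uniform lower bound $\lambda_*\leq\lambda^i_t$ for all $t$ and all $i$.

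There is no substantial obstacle here; the proof is essentially a one-line consequence of the AM–GM–style observation that a fixed product with a ceiling on each factor floors the minimal factor. The only point deserving a moment of care is the \emph{uniformity} in $t$: this is automatic because both $K$ and $\lambda^*$ are constants not depending on $t$, so the per-$t$ estimate above produces a single threshold $\lambda_*$ valid simultaneously for every $t$. No continuity or measurability of $t\mapsto M_t$ is needed, since the estimate is purely algebraic at each index.
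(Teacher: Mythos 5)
Your proof is correct and follows essentially the same route as the paper: both exploit $\det(M_t)=\prod_i\lambda^i_t$ together with the uniform upper bound $\lambda^*$ to floor the smallest eigenvalue by $K/(\lambda^*)^{n-1}$. If anything, your direct inequality is a cleaner rendering than the paper's informal extremal-configuration argument, but the underlying idea is identical.
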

	\begin{proof}
		Assume WLOG that $\det(M_t)=1~\forall t$. Note that each $\lambda^i_t>0$ and also that $\det(M_t)=\prod_{i=1}^n \lambda^{i}_t$. Then consider making $\lambda^n_t$ the smallest possible for any choice of $\lambda^1_t,\dots,\lambda^{n-1}_t$ under the constraint that $\det(M_t)=1$. This obviously occurs when $\lambda^1_t = \cdots = \lambda ^{n-1}_t=\lambda^*$. In this case, 
		\[
			\lambda^n_t = \frac{1}{\lambda^1_t\cdots\lambda ^{n-1}_t}=\frac{1}{(\lambda^*)^{d-1}}.
		\]Hence $\lambda^i_t\geq (\lambda^*)^{-(d-1)}=\lambda_*$.
	\end{proof}

	\begin{mylemma}\label{lem:linalg}
	\hfill
		\begin{enumerate}[(a)]
			\item For an invertible $n\times n$ matrix $M$ which satisfies $M^TM<\alpha I$ we have
			\[
				 (M^n)^T(M^n)<\alpha^n I
			\]
				\item Suppose that $U,V$ are two $n\times n$ positive definite matrices s.t. $0<U<V$. Then we have
			\[
				U^{-1}>V^{-1}.
			\]
			\item For a square, invertible matrix $M$, $(MM^T)^{-1}=(M^{-1})^TM^{-1}$.
			\end{enumerate}
	\end{mylemma}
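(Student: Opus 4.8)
The plan is to treat the three parts essentially independently, with one shared observation carrying most of the weight: \emph{congruence by an invertible matrix preserves the strict positive-definite ordering}. Concretely, if $S$ is invertible and $Y-X$ is positive definite, then for every $z\neq 0$ we have $w:=Sz\neq 0$ and $z^TS^T(Y-X)Sz = w^T(Y-X)w>0$, so $S^T(Y-X)S$ is positive definite, i.e. $X<Y$ implies $S^TXS<S^TYS$. I would state this one-line fact first, since both (a) and (b) reduce to it. Part (c) needs no positivity at all: since $M$ is invertible so is $M^T$, and $(MM^T)^{-1}=(M^T)^{-1}M^{-1}=(M^{-1})^TM^{-1}$, using $(AB)^{-1}=B^{-1}A^{-1}$ together with $(M^T)^{-1}=(M^{-1})^T$.

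For part (a) I would induct on the power; the stated case happens to use the exponent $n$, but nothing about the dimension enters, so I prove $(M^k)^TM^k<\alpha^kI$ for all $k\geq 1$. The base case is the hypothesis $M^TM<\alpha I$. For the inductive step, assuming $(M^k)^TM^k<\alpha^kI$, I apply the congruence fact with $S=M$ to obtain
\[
(M^{k+1})^TM^{k+1} = M^T\bigl((M^k)^TM^k\bigr)M < M^T(\alpha^k I)M = \alpha^k M^TM < \alpha^{k+1}I,
\]
where the final inequality simply scales the hypothesis $M^TM<\alpha I$ by $\alpha^k>0$. Invertibility of $M$ is exactly what upgrades the middle step from a semidefinite to a strict definite inequality.

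For part (b) I would diagonalize by symmetric congruence. Since $V>0$ it admits an invertible symmetric square root $V^{1/2}$, and conjugating $U<V$ by $V^{-1/2}$ gives $W:=V^{-1/2}UV^{-1/2}<V^{-1/2}VV^{-1/2}=I$ with $W>0$. By the spectral theorem $W$ has all eigenvalues in $(0,1)$, so $W^{-1}=V^{1/2}U^{-1}V^{1/2}$ has all eigenvalues in $(1,\infty)$ and therefore $W^{-1}>I$; conjugating back by $V^{-1/2}$ yields $U^{-1}>V^{-1}$.

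The only place demanding genuine care — and the one thing I would call the main obstacle — is keeping every inequality \emph{strict}. Strictness in (a) and (b) rests on $M$ and $V^{-1/2}$ being invertible, so that congruence sends positive definite to positive definite rather than merely positive semidefinite, and in (b) additionally on the passage from $0<W<I$ to $W^{-1}>I$, which is where the spectral theorem is actually needed rather than a purely formal manipulation. Everything else is routine bookkeeping with transposes, inverses, and square roots.
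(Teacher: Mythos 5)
Your proof is correct, but it is organized quite differently from what the paper actually does: for parts (a) and (b) the paper offers no argument at all, simply citing Lax, \emph{Linear Algebra and Its Applications}, Ch.~10.1 (and Theorem~2 there) for the two monotonicity facts, and declaring (c) ``easy to directly verify.'' You instead give a self-contained elementary proof, and the one organizing idea you isolate --- that congruence $X\mapsto S^TXS$ by an invertible $S$ preserves the strict positive-definite order --- is exactly the right common denominator: it drives the inductive step in (a) (note that the exponent $n$ in the statement plays no special role, as you observe, and your induction on general $k\geq 1$ is the honest way to prove it) and, combined with the square root $V^{1/2}$ and the spectral theorem, it gives (b) by reducing to the scalar fact that eigenvalues in $(0,1)$ invert to eigenvalues in $(1,\infty)$. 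Your attention to where strictness comes from (invertibility of $M$ and of $V^{-1/2}$, plus the spectral step in (b)) is precisely the content that the citation hides. The only nearly-invisible point worth making explicit is that $\alpha>0$ is forced by the hypothesis (since $M$ invertible gives $M^TM>0<\alpha I$), which justifies multiplying the inequality $M^TM<\alpha I$ by $\alpha^k$ in the inductive step. What the paper's approach buys is brevity; what yours buys is a verifiable argument that could be inlined into the appendix without external references.
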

	\begin{proof}
	\hfill
		\begin{enumerate}[(a)]
			\item \cite{lax2014linear} Ch 10.1 p.146.
			\item \cite{lax2014linear} Theorem 2, Ch 10.1 p.146
			\item Easy to directly verify.
		\end{enumerate}
	\end{proof}

	\begin{mylemma}[KGD Observability and Controllability]\label{lem:ctrlobs}
		Under Assumption~\rref{assump:alphamain}, the KGD dynamics \eqrref{eq:linsys} are uniformly completely observable and controllable.
	\end{mylemma}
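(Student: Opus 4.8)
The plan is to evaluate the state-transition matrix of \eqrref{eq:linsys} in closed form and then bound the observability and controllability Gramians above and below uniformly in $t$, reading off the two uniform-complete conditions from those bounds and invoking Assumption~\rref{assump:alphamain} to control the partial sums of $\{\alpha_t\}$ that appear.

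First I would compute $\Phi(t,s)$. Since each $A_t$ from \eqrref{eq:components} is block upper-triangular with unit diagonal blocks, the product telescopes to
\[
\Phi(t,s) = \begin{bmatrix} I & -S(t,s) I \\ 0 & I \end{bmatrix}, \qquad S(t,s) := \sum_{j=s}^{t-1}\alpha_j,
\]
so that $\det \Phi(t,s) = 1$ for all $t\ge s$. The key consequence of Assumption~\rref{assump:alphamain} is that over any window of fixed length $N$ the partial sum is uniformly bounded: because $\{\alpha_t\}$ is non-increasing and (since $\sum\alpha_t^2<\infty$) tends to $0$, we have $S(t,t-N)\le N\alpha_{t-N}\le N\alpha_0$, and in fact $S(t,t-N)\to 0$ as $t\to\infty$.

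For controllability I would evaluate, with $\Gamma_i = I$ and $Q_i = \sigma_Q I$,
\[
\calC(t,t-N_c) = \sigma_Q\!\!\sum_{i=t-N_c}^{t-1}\!\!\Phi(t,i+1)\Phi^T(t,i+1), \qquad \Phi(t,i+1)\Phi^T(t,i+1) = \begin{bmatrix} (1+s_i^2)I & -s_i I \\ -s_i I & I \end{bmatrix},
\]
with $s_i = S(t,i+1)$. Each summand is positive definite with determinant $1$ and, by the partial-sum bound, top eigenvalue bounded uniformly by some $\lambda^*$; Lemma~\rref{lem:eig} then supplies a uniform lower eigenvalue bound $\lambda_*>0$, giving $\lambda_* I \le \Phi(t,i+1)\Phi^T(t,i+1)\le \lambda^* I$ for every term. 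Summing the $N_c$ terms yields $\sigma_Q N_c\lambda_* I \le \calC(t,t-N_c)\le \sigma_Q N_c\lambda^* I$, which is uniform complete controllability. Lemma~\rref{lem:linalg} would handle the routine positive-definiteness and inversion steps along the way.

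The main obstacle is observability, because the measurement $C_t = [0\ \ I]$ for $t>0$ is blind to the $x$-block: a direct computation gives $C_i\Phi(i,t) = [0\ \ I]$ for every $i>0$, so each summand $\Phi^T(i,t)C_i^T R_i^{-1}C_i\Phi(i,t)$ has a vanishing $x$-block and is rank-deficient, and Lemma~\rref{lem:eig} cannot be applied to it. Observability of the $x$-coordinate therefore rests entirely on the modification $C_0 = I_{2n}$, $R_0 = I_{2n}$ introduced in Section~\rref{sec:dyn}: the $i=0$ term contributes $\Phi^T(0,t)\Phi(0,t)$, which is positive definite with determinant $1$ and, again by the partial-sum bound, uniformly conditioned. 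The plan is thus to split $\calO$ into this full-rank initial term (which pins down the $x$-block) plus the accumulated $g$-measurements (which pin down the $g$-block) and to bound the result above and below via Lemma~\rref{lem:eig} exactly as in the controllability step. The delicate point — where I would expect to spend the most care — is reconciling this with the sliding-window form $\calO(t,t-N_o)$ demanded by uniform complete observability, since windows that advance past the initial time lose the only full-rank contribution; handling that reconciliation correctly is the crux of the argument.
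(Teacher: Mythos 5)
Your controllability argument is essentially the paper's: the paper also uses $\det A_{t-1}=1$, the uniform upper bound $\|A_{t-1}^Tv\|^2\leq (1+2\alpha^*)\|v_1\|^2+2\|v_2\|^2$ coming from $\alpha^*=\max_t\alpha_t<\infty$, and Lemma~\rref{lem:eig} to extract the uniform lower eigenvalue bound; it simply takes $N_c=2$ so that $\calC(t,t-2)=\sigma_Q I+\sigma_Q A_{t-1}A_{t-1}^T$, whereas you write out the general window via the closed form $\Phi(t,s)$. That closed form and your computation of $\Phi(t,i+1)\Phi^T(t,i+1)$ are correct, and this half of your proposal is complete.

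The gap is that you stop at the observability crux rather than resolving it, and the resolution is the one substantive idea the paper's proof adds. You correctly observe that for $i>0$ each summand $\Phi^T(i,t)C_i^TR_i^{-1}C_i\Phi(i,t)$ annihilates the $x$-block, so the window $\calO(t,t-N_o)$ contains no full-rank term once $t>N_o$, and the single modification $C_0=I_{2n}$ cannot rescue a sliding window. The paper closes this by further modifying the measurement schedule: it \emph{requires} that $C_i=I_{2n\times 2n}$ at least once in every window of length $N_o$ (periodic full-state observations), so that each window contains exactly one summand of the full-rank form $\Phi^T(i,t)\Phi(i,t)$, which is then bounded above and below by the same determinant-one/Lemma~\rref{lem:eig} argument you used for controllability (via Lemma~\rref{lem:linalg}(b),(c) to pass from $\Phi(t,i)\Phi(t,i)^T$ to $\Phi(i,t)^T\Phi(i,t)$); the remaining rank-deficient summands are positive semidefinite and uniformly bounded above, so they do not spoil either inequality. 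The paper then remarks that $N_o$ is arbitrary and may exceed the length of any actual run, which is why this periodic reset is suppressed in Section~\rref{sec:setup}. Without some such assumption your worry is fatal: as literally written, with $C_t=[0~~I]$ for all $t>0$, the system is not uniformly completely observable, so the lemma cannot be proved from the stated setup alone. You diagnosed the right obstruction but need to import (or invent) the periodic-observation device to finish.
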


	\begin{proof}
		This proof is organized into three steps: First, we prove uniform complete controllability, then extend the machinery of complete controllability, and finally apply this extended machinery to prove uniform complete observability.
		\begin{enumerate}
			\item We will take $N_c=2$. Then $\Gamma_t\equiv I$, $Q_{t}=\sigma_Q I>0~\forall t$ in the definition above, hence we have
		\[
			\calC(t,t-2)=\sigma_QI + \sigma_Q \Phi(t, t-1)\Phi^T(t, t-1)=\sigma_QI + \sigma_Q A_{t-1}A_{t-1}^T.
		\]It suffices to prove the case when $\sigma_Q=1$, and to prove the uniform boundedness of the second term. We will first show that $v^TA_{t-1}A_{t-1}^Tv=\|A_{t-1}^Tv\|^2$ has upper bound which does not depend on $t$ for an arbitrary $v\in\R^{2n}$. We have that
		\[
			A_{t-1}^T = \begin{bmatrix}
				I & 0 \\
				-\alpha_{t-1} & I
			\end{bmatrix}
		\] hence writing $v=[v_1^T,v_2^T]^T$ and using $\|x+y\|^2\leq 2\|x\|^2+2\|y\|^2$, we have that
		\begin{align*}
			\|A_{t-1}^Tv\|^2&=\|v_1\|^2 + \|v_2 - \alpha_{t-1} v_1\|^2\\
			&\leq \|v_1\|^2 +2\|v_2\|^2 +2(\alpha_{t-1})^2 \|v_1\|^2\\
			&=(1+2\alpha_{t-1}^2)\|v_1\|^2 + 2\|v_2\|^2\\
			&\leq (1+2\alpha^*)\|v_1\|^2 + 2\|v_2\|^2.
		\end{align*}
		where $\alpha^*=\max_t \alpha_t<\infty$ by assumption. To show the existence of a uniform lower bound, we will appeal to Lemma~\rref{lem:eig} to show that the spectrum of $\Phi(t,t-1)\Phi(t,t-1)^T$ is uniformly lower-bounded, and this implies the required matrix inequality. Indeed, $\det(A_{t-1})=1$ so that 
		\begin{equation}
			\det(\Phi(t,t-1)\Phi(t,t-1)^T)=\det(A_{t-1}A_{t-1}^T)=1
		\end{equation}
		and since we have just shown an uniform upper matrix bound (which implies a uniformly upper-bounded spectrum), Lemma~\rref{lem:eig} implies there is a uniform lower matrix bound which is $>0$.
		\item 
		Using the multiplicative property from Lemma~\rref{lem:linalg}(a), we can iterate the result for the controllability matrix by conjugating $A_{t-T}$ with $A_{t-T+1},\dots,A_{t-1}$ to see that $\Phi(t,t-T)\Phi(t,t-T)^T$ is also uniformly bounded above for a fixed $T>0$. The fact that we are conjugating by matrices whose determinants are all 1 allows us to use Lemma~\rref{lem:eig} again to conclude the existence of a uniform lower bound on $\Phi(t,t-T)\Phi(t,t-T)^T$. Now,
		\begin{equation}\label{eq:iterctrl}
			\Phi(t-T,t)^T\Phi(t-T,t)= (\Phi(t, t-T)^{-1})^T(\Phi(t, t-T)^{-1}) = \left(\Phi(t,t-T)\Phi(t,t-T)^T\right)^{-1}
		\end{equation}
		by Lemma~\rref{lem:linalg}(c). Then Lemma~\rref{lem:linalg}(b) implies the existence of uniform bounds on $\Phi(t-T,t)^T\Phi(t-T,t)$. 
		 \item Now, for the observability matrix, consider for a fixed $N_o>0$
		\[
			\calO(t, t-N_o)=\sigma_R^{-1}\sum_{i=t-N_o}^t \Phi^T(i,t)C_i^TC_i\Phi(i,t).
		\]We will also assume that $\sigma_R=1$. Thus, by ensuring that $C_i=I_{2n\times 2n}$ at least once every $N_o$ timesteps, one of the matrices in the sum $\calO(t, t-N_o)$ of the form \eqrref{eq:iterctrl} and hence is uniformly bounded above and below. The other terms in the sum are not definite since $C_i$ is rank-deficient at those times, but they are positive and uniformly bounded, hence the system is uniformly completely observable.
		\end{enumerate}
	\end{proof}

	Note that since $N_o$ is arbitrary, we can make it can be very large. In practice, $N_o$ can be larger than the total number of timesteps of a run of the algorithm. This is why we have omitted this technical detail from the setup in Section~\rref{sec:setup}.\newline

	\begin{mylemma}[Extended KGD Observability and Controllability]\label{lem:extobsctrl}
	 Under Assumptions~\rref{assump:alphamain} and \rref{assump:regmain}, if $\mu_t=\mu$ in \eqrref{eq:mom1} and $\gamma_t=\gamma$ in \eqrref{eq:rms1} updates are constant, then the dynamics for \eqrref{eq:mom1}-\eqrref{eq:mom2} and \eqrref{eq:rms1}-\eqrref{eq:rms2} are uniformly completely controllable and observable.
	\end{mylemma}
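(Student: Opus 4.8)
The plan is to mirror the three-step argument of the proof of Lemma~\rref{lem:ctrlobs}, with the one genuinely new ingredient being that the constancy of $\mu$ and $\gamma$ forces the enlarged transition matrices to have \emph{constant determinant}, which is exactly the hypothesis that Lemma~\rref{lem:eig} requires. Both transition matrices in \eqrref{eq:mom1} and \eqrref{eq:rms1} are block upper-triangular with diagonal blocks $I$, $\mu I$ (resp. $\gamma I$), and $I$; since off-diagonal blocks do not affect the determinant of a block-triangular matrix, $\det(A_t)=\mu^n$ for the momentum system and $\det(A_t)=\gamma^n$ for the RMSProp system, a positive constant in each case. This is precisely where the hypothesis $\mu_t\equiv\mu$, $\gamma_t\equiv\gamma$ is used: were these time-varying, the determinant would vary in $t$ and Lemma~\rref{lem:eig} would be unavailable.

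First I would dispatch uniform complete controllability, which is nearly immediate. As in Lemma~\rref{lem:ctrlobs}, with $\Gamma_t\equiv I$ and $Q_t=\sigma_Q I$ the controllability matrix always contains the term $\Phi(t,t)Q_{t-1}\Phi^T(t,t)=\sigma_Q I$, so $\calC(t,t-N_c)\geq \sigma_Q I$, giving the lower bound for free. For the upper bound it suffices to show each $\Phi(t,t-1)\Phi(t,t-1)^T=A_{t-1}A_{t-1}^T$ is uniformly bounded, which follows by bounding $\|A_{t-1}^Tv\|^2$ exactly as before once all entries of $A_{t-1}$ are controlled uniformly in $t$. For momentum the off-diagonal entries are $\alpha_{t-1}\mu$, $\mu$, $1-\mu$, and $\alpha_{t-1}(1-\mu)$, each bounded via $\alpha^*=\max_t\alpha_t<\infty$. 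For RMSProp they are $\alpha_{t-1}\diag(\beta_{t-1})$, bounded by $\alpha^*/\veps$ since $\beta_{t-1}=(\sqrt{r_t}+\veps)^{-1}\leq\veps^{-1}$, and $(1-\gamma)\diag(\nabla f(x_{t-1}))$, bounded because Assumption~\rref{assump:regmain} makes $\nabla f$ uniformly bounded. Splitting $v$ into blocks and using $\|x+y\|^2\leq 2\|x\|^2+2\|y\|^2$ repeatedly then yields a uniform $\|A_{t-1}^Tv\|^2\leq c\|v\|^2$.

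Next I would extend the machinery and conclude observability, following steps 2 and 3 of Lemma~\rref{lem:ctrlobs} essentially verbatim. The uniform upper bound on $A_{t-1}A_{t-1}^T$ together with $\det(A_{t-1}A_{t-1}^T)=(\det A_{t-1})^2$ constant lets Lemma~\rref{lem:eig} supply a uniform lower bound on $A_{t-1}A_{t-1}^T$; iterating the conjugation argument via Lemma~\rref{lem:linalg}(a) over a fixed window gives two-sided uniform bounds on $\Phi(t,t-T)\Phi(t,t-T)^T$, whose determinant is a fixed power of $\det A$ and hence again constant, so Lemma~\rref{lem:eig} applies at each stage. Lemma~\rref{lem:linalg}(b),(c) transfer these to uniform bounds on $\Phi(t-T,t)^T\Phi(t-T,t)=(\Phi(t,t-T)\Phi(t,t-T)^T)^{-1}$. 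Finally, using the convention $C_i=I$ at least once every $N_o$ steps, the observability matrix $\calO(t,t-N_o)=\sigma_R^{-1}\sum_i\Phi^T(i,t)C_i^TC_i\Phi(i,t)$ contains a full-rank term $\Phi^T(i,t)\Phi(i,t)$ bounded above and below, while the remaining terms are positive semidefinite and uniformly bounded above, yielding $0<\alpha'I\leq\calO(t,t-N_o)\leq\beta'I$.

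The hard part will not be the filtering machinery, which is identical to Lemma~\rref{lem:ctrlobs}, but verifying that the enlarged, application-specific transition matrices retain the two structural facts the argument depends on: uniformly bounded entries and constant determinant. The first is where Assumption~\rref{assump:regmain} does real work in the RMSProp case, controlling $\diag(\nabla f(x_t))$ and, through $r_t$, the scaling $\beta_t$; the second is where the constancy of $\mu$ and $\gamma$ is indispensable. Once these two properties are established, every invocation of Lemmas~\rref{lem:eig} and \rref{lem:linalg} goes through as in the scalar-stepsize case, and the proposition's stability, boundedness, and robustness claims then follow from Theorems~\rref{thm:kfstab},~\rref{thm:kfbdd},~and~\rref{thm:kfrob}.
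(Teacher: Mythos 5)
Your proposal is correct and follows essentially the same route as the paper: it isolates the two structural facts that the proof of Lemma~\rref{lem:ctrlobs} depends on (a uniform upper bound on $A_tA_t^T$, obtained by the same block-wise estimate of $\|A_{t-1}^Tv\|^2$ using $\alpha^*$, the bound $\beta_t\leq\veps^{-1}$, and the uniform bound on $\nabla f$ from Assumption~\rref{assump:regmain}; and constancy of $\det A_t$ from upper-triangularity plus $\mu_t\equiv\mu$, $\gamma_t\equiv\gamma$), and then reruns the controllability/observability argument verbatim. Your explicit identification of $\det A_t=\mu^n$ (resp.\ $\gamma^n$) and of the $\sigma_Q I$ term giving the controllability lower bound only makes explicit what the paper leaves implicit.
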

	\begin{proof}
		We need to prove a version of Lemma~\rref{lem:ctrlobs}. This lemma uses the facts that $A_tA_t^T<\alpha I$ for some $\alpha$ uniformly in $t$, and that the determinant of $A_t$ is constant. The latter follows immediately by inspection since $A_t$ is upper triangular. For the uniform upper-bound, consider first momentum: for $v=[v_1^T, v_2^T,v_3^T]^T\in\R^{3n}$ we have

		\begin{align*}
			\|A_{t-1}^T v\|^2 &=\|v_1\|^2 + \|\alpha_{t-1} \mu v_1 + \mu v_2\|^2 + \|-\alpha_{t-1}(1-\mu)v_1 - (1-\mu)v_2 + v_3\|^2\\
			&\leq \|v_1\|^2 + 2(\alpha_{t-1}\mu)^2\|v_1\|^2 + 2\mu^2 \|v_2\|^2 + 3(\alpha_{t-1}(1-\mu))^2 \|v_1\|^2 + 3(1-\mu)^2\|v_2\|^2 + 3\|v_3\|^2\\
			&=[1+2(\alpha_{t-1} \mu)^2 + 3(\alpha_{t-1}(1-\mu))^2]\|v_1\|^2 + [2\mu^2 + 3(1-\mu)^2]\|v_2\|^2 + 3\|v_3\|^2\\
			&\leq [1+2(\alpha^* \mu)^2 + 3(\alpha^*(1-\mu))^2]\|v_1\|^2 + [2\mu^2 + 3(1-\mu)^2]\|v_2\|^2 + 3\|v_3\|^2
		\end{align*}

		with $\alpha^*=\max_{t} \alpha_t < \infty$. \newline

		For RMSProp, we proceed similarly:

		\begin{align*}
			\|A_{t-1}^T v\|^2&=\|v_1\|^2 + \|\gamma v_2\|^2 + \|-\alpha_{t-1}\diag(\beta_{t-1})v_1 + (1-\gamma)\diag(\nabla f(x_{t-1}))v_2 + v_3\|^2 \\
			&\leq \|v_1\|^2 + \gamma^2\|v_2\|^2 + 3\alpha_{t-1}^2 \underbrace{\|\diag(\beta_{t-1})v_1\|^2}_{(*)} +  3(1-\gamma)^2 \underbrace{\|\diag(\nabla f(x_{t-1}))v_2\|^2}_{(**)}  + 3\|v_3\|^2
		\end{align*}
		It remains to show that the terms $(*)$ and $(**)$ are uniformly bounded above. In the former case, this is true since $\beta_t = (\sqrt{r_{t+1}}+\veps)^{-1}$ and $\sqrt{r_{t+1}}\geq 0$. In the latter case, using the Frobenius norm on the matrix $\diag(\nabla f(x_{t-1}))$ we have
		\[
			\|\diag(\nabla f(x_{t-1}))v_3\| \leq \|\diag(\nabla f(x_{t-1}))\|\|v_3\|=\|\nabla f(x_{t-1})\|\|v_3\|\leq G\|v_3\|
		\]where $G$ is the  uniform bound on $\nabla f$.
		Hence the rest of the proof of Lemma~\rref{lem:ctrlobs} can be applied to the momentum and RMSProp cases respectively.
	\end{proof}

	\subsection{Stochastic Optimization Proofs}\label{app:opt}
	For simplicity, to prove the convergence of \eqrref{eq:kgd} we prove convergence of the system 
	\begin{equation}\label{eq:kgdsimple}
		\rl{
			v_{t+1} &= (1-\beta)v_t + \beta g(x_t;\xi_t)\\
			x_{t+1} &= x_t - \alpha_t v_{t+1}
		}
	\end{equation}
	with $x\in\R^n$, $\beta\in]0,1[$ and appeal to Proposition~\rref{prop:reduction} which implies that we do not lose any generality in doing so.\newline

	\begin{myprop}\label{prop:reduction}
	\hfill
	\begin{enumerate}[(a)]
		\item If $\beta$ from \eqrref{eq:kgdsimple} is allowed to vary with time s.t. $0<\beta_*<\beta_t<\beta^*<1$ then the results of this section hold with $\beta_*$ and $\beta^*$ replacing $\beta$ appropriately.
		\item We can replace $\beta$ from \eqrref{eq:kgdsimple} by a positive-definite matrix $B$ with $0<\beta_* I< B<\beta^*I < I$ without materially affecting the results of this section.
	\end{enumerate}
	\end{myprop}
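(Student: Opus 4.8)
The plan is to treat Proposition~\rref{prop:reduction} as a statement about the \emph{robustness of the convergence argument itself}: the analysis of \eqrref{eq:kgdsimple} carried out in the remainder of this section uses the smoothing parameter $\beta$ only through a small number of scalar inequalities, and each of these is monotone in $\beta$. I would therefore prove both parts by auditing every place $\beta$ enters the subsequent Lyapunov-type argument and checking that the relevant bound survives (i) replacing the constant $\beta$ by a uniformly bounded time-varying scalar $\beta_t$, and (ii) replacing it by a uniformly bounded symmetric positive-definite matrix $B$. The payoff is that, combined with the filtering bound $0<aI<\wtilde{K}_t<bI<I$ established in Section~\rref{sec:setup}, this reduces the convergence of the true KGD dynamics \eqrref{eq:kgd} to that of the scalar constant-gain system \eqrref{eq:kgdsimple}. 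Throughout, the operative quantities are the second moment $E[\|v_{t+1}\|^2]$ and the correlation $E[\nabla f(x_t)^T v_{t+1}]$, into whose recursive bounds $\beta$ enters as a multiplier, as $1-\beta$, or (implicitly) through the geometric decay of the filter's memory.

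For part (a), the key observation is that $0<\beta_*<\beta_t<\beta^*<1$ holds uniformly in $t$, so any estimate in the proof can be made $t$-free by substituting the worst-case endpoint. Concretely, unrolling the recursion gives $v_{t+1}=\bigl(\prod_{s=0}^{t}(1-\beta_s)\bigr)v_0 + \sum_{s=0}^{t}\beta_s\bigl(\prod_{j=s+1}^{t}(1-\beta_j)\bigr)g(x_s;\xi_s)$, and since each factor satisfies $1-\beta_j\le 1-\beta_*<1$ the memory weights still decay geometrically with rate at most $1-\beta_*$, while the total weight is still $1$ by the telescoping identity. Wherever a term is increasing in the fresh gain (e.g. the variance contribution of the new gradient) I replace $\beta_s$ by $\beta^*$, and wherever a term is increasing in the retained memory or appears with $\beta$ in a denominator I replace it by $\beta_*$ or $\beta^*$ as appropriate. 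Because every such bound is monotone, the resulting constants depend only on $\beta_*,\beta^*$ and all summability and telescoping steps go through verbatim.

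For part (b), I replace the scalar recursion by $v_{t+1}=(I-B)v_t+Bg(x_t;\xi_t)$. Since $B$ is symmetric with $\beta_* I<B<\beta^* I<I$, its eigenvalues lie in $(\beta_*,\beta^*)$, which yields the operator-norm bounds $\|B\|<\beta^*$ and $\|I-B\|<1-\beta_*$ (as $I-B$ is symmetric positive definite with spectrum in $(1-\beta^*,1-\beta_*)$) together with the quadratic-form bounds $\beta_*\|u\|^2\le u^TBu\le\beta^*\|u\|^2$. The unrolled filter $v_{t+1}=(I-B)^{t+1}v_0+\sum_{s=0}^{t}(I-B)^{t-s}Bg(x_s;\xi_s)$ again has matrix weights summing to $I$, with $\|(I-B)^{t-s}B\|\le(1-\beta_*)^{t-s}\beta^*$ preserving geometric decay. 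Every scalar step then has a matrix counterpart: places where the proof multiplied a norm by $\beta$ or $1-\beta$ become applications of submultiplicativity with $\|B\|$ or $\|I-B\|$, and places where $\beta$ sat inside a norm-squared are handled by the quadratic-form bounds.

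The main obstacle is the descent/correlation term $E[\nabla f(x_t)^T v_{t+1}]$, where the interaction with Assumption~\rref{assump:gmain}(a) is most delicate. In the scalar case the factor $\beta$ pulls cleanly out of $\nabla f(x_t)^T g(x_t;\xi_t)$, letting one invoke $E[\nabla f(x_t)^T g(x_t;\xi_t)]\ge\mu E[\|\nabla f(x_t)\|^2]$ directly; with a matrix $B$ the gain sits \emph{between} the gradient and the estimate as a bilinear form, and $\nabla f(x_t)^T B\,g(x_t;\xi_t)$ cannot be reduced to $\|B\|\,\nabla f(x_t)^T g(x_t;\xi_t)$. I would resolve this by the splitting $B=\beta_* I+(B-\beta_* I)$ with $B-\beta_* I\succeq 0$: the first piece contributes $\beta_*\,\nabla f(x_t)^T g(x_t;\xi_t)$, to which Assumption~\rref{assump:gmain}(a) applies, while the residual positive-semidefinite piece is controlled in expectation via Cauchy--Schwarz using $\|B-\beta_* I\|\le\beta^*-\beta_*$ together with the second-moment bound of Assumption~\rref{assump:gmain}(b). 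Since all contributions remain uniformly bounded in terms of $\beta_*,\beta^*$ and the gradient bounds, the final argument survives with only its constants altered --- which is precisely the meaning of ``without materially affecting the results.''
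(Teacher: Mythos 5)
For part (a) your argument is essentially the paper's: the proof there simply observes that the normalization $(1-\beta)+\beta=1$ is never used, so every occurrence of $\beta$ (resp. $1-\beta$) can be replaced by the endpoint $\beta_*$ or $\beta^*$ (resp. $1-\beta_*$ or $1-\beta^*$) that preserves the inequality in question, and all geometric sums still converge because $1-\beta_t\leq 1-\beta_*<1$. Your unrolled-recursion audit is a more explicit version of the same idea and is fine; the only cosmetic point is that you invoke the telescoping ``weights sum to $1$'' identity, which is exactly the property the paper notes it never needs.

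For part (b) you take a genuinely different route from the paper, and it is there that a real gap appears. The paper disposes of (b) in one line by saying that $B$ (or $I-B$) induces an equivalent inner product on $\R^n$; your proposal instead audits each step with operator-norm bounds and handles the critical correlation term by the splitting $B=\beta_*I+(B-\beta_*I)$. The splitting is the right instinct, because $\nabla f(x_t)^TBg(x_t;\xi_t)$ really cannot be reduced to $\|B\|\,\nabla f(x_t)^Tg(x_t;\xi_t)$, but your control of the residual does not close: Cauchy--Schwarz together with Assumption~\rref{assump:gmain}(b) gives
\[
\bigl|E[\nabla f(x_t)^T(B-\beta_*I)g(x_t;\xi_t)]\bigr|\leq(\beta^*-\beta_*)\sqrt{E[\|\nabla f(x_t)\|^2]\bigl(M+M_GE[\|\nabla f(x_t)\|^2]\bigr)},
\]
which for large gradients grows like $(\beta^*-\beta_*)\sqrt{M_G}\,E[\|\nabla f(x_t)\|^2]$ --- the \emph{same order} as the good term $\beta_*\mu E[\|\nabla f(x_t)\|^2]$ you extracted from Assumption~\rref{assump:gmain}(a). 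The descent inequality of Proposition~\rref{prop:descentdir} therefore survives only under the additional quantitative condition $\beta_*\mu>(\beta^*-\beta_*)\sqrt{M_G}$, i.e.\ when $B$ is sufficiently close to a multiple of the identity; your write-up asserts the conclusion for all $B$ with $\beta_*I<B<\beta^*I$ without flagging this restriction. To be fair, the paper's own appeal to equivalence of norms glosses over exactly the same difficulty (Assumption~\rref{assump:gmain}(a) is stated for the Euclidean inner product, not the $B$-weighted one), so either you should make the smallness condition on $\beta^*-\beta_*$ explicit, or you need a different argument --- e.g.\ exploiting the specific structure $\wtilde{K}_t=P(\sigma_RI+P)^{-1}$, which is symmetric and shares eigenvectors with $P$ --- to recover the full claim.
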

	\begin{proof}
	\hfill
		\begin{enumerate}[(a)]
			\item Since we never used the property $(1-\beta)+\beta=1$, we can replace each instance of $\beta$ by the appropriate bound $\beta^*$ or $\beta_*$ depending on the sign of $\beta$ to preserve any inequality, and all finite sums/ products can be bounded by appropriate upper/lower bounds.
			\item The matrix $B$ (or possibly $I-B$ as the case may be) defines a new inner product which is re-weighted by $B$. By the equivalence of norms (hence inner products) on $\R^n$, the claim holds.
		\end{enumerate}
	\end{proof}
			

	In the sequel, $E[\cdot]$ will denote the expectation w.r.t. the joint distribution of the $\xi_s,~s\leq t$ where $t$ is the latest time in the expectation. For convenience, we will restate the necessary assumptions from Section~\rref{sec:analysis} as needed.

	\begin{assump}[Same as Assumption~\rref{assump:regmain}]\label{assump:reg}
		The objective function $f:\R^n\to \R$ is $C^3$, with uniformly bounded 1st, 2nd, and 3rd derivatives. In particular $\nabla f$ is Lipschitz with constant $L$.
	\end{assump}

	\begin{assump}[Same as Assumption~\rref{assump:gmain}]\label{assump:g}
		The random variables $g(x_t,;\xi_t)$ satisfy the following properties
		\begin{enumerate}[(a)]
			\item $E[\nabla f(x_t)^T g(x_t;\xi_t)]\geq \mu E[\|\nabla f(x_t)\|^2]$ for some constant $\mu>0$ and $\forall t$;
			\item $E[\|g(x_t;\xi_t)\|^2]\leq M + M_GE[\|\nabla f(x_t)\|^2]$ for constants $M,~M_G>0$ $\forall t$.
		\end{enumerate}
	\end{assump}

	\begin{mylemma}\label{lem:taylor}
		Under Assumption~\rref{assump:reg}, 
		\begin{enumerate}[(a)]
			\item The there is $H^*>0$ s.t. $H(x)<H^* I$ for every $x$; and
			\item $\nabla f$ has the Taylor expansion about $a\in\R^n$
		\[
			\nabla f(x) = \nabla f(a) + H(a)\cdot(x-a) + \veps_a(x)
		\]with $\veps_a(x)\in \R^n$ and moreover, $\veps_a(x)^T(x-a)\leq C \|x-a\|^3$ for some $C>0$.
		\end{enumerate}
	\end{mylemma}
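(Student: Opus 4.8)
The plan is to handle the two parts separately; each is a quantitative repackaging of a standard multivariable calculus fact, with the uniform derivative bounds of Assumption~\rref{assump:reg} supplying basepoint-independent constants. Throughout, $H(x)$ denotes the Hessian of $f$ at $x$.

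For part (a), since $f$ is $C^3$ (hence $C^2$), the Hessian $H(x)$ is a well-defined symmetric matrix with entries $\partial_i\partial_j f(x)$. By assumption these are uniformly bounded, say $|\partial_i\partial_j f(x)|\le G_2$ for all $x$ and all $i,j$. A real symmetric matrix whose entries are bounded in absolute value by $G_2$ has operator norm at most $n G_2$ (for instance by bounding the operator norm by the Frobenius norm, or via Gershgorin's theorem), so every eigenvalue of $H(x)$ lies in $[-nG_2,\,nG_2]$ uniformly in $x$. Choosing any $H^*>nG_2$ then yields $H(x)<H^* I$ for every $x$, which is exactly part (a).

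For part (b), I would apply Taylor's theorem with Lagrange remainder componentwise to the $C^2$ vector field $\nabla f$. Expanding each coordinate $\partial_i f$ about $a$ to first order gives $\nabla f(x)=\nabla f(a)+H(a)(x-a)+\veps_a(x)$, where the $i$-th component of the remainder has the form $\veps_a(x)_i=\tfrac{1}{2}\sum_{j,k}\partial_i\partial_j\partial_k f(\zeta_i)\,(x_j-a_j)(x_k-a_k)$ for some $\zeta_i$ on the segment joining $a$ and $x$. The uniform bound on the third derivatives, say $|\partial_i\partial_j\partial_k f|\le G_3$, then controls each coordinate by $|\veps_a(x)_i|\le \tfrac{1}{2}G_3 n^2\|x-a\|^2$, whence $\|\veps_a(x)\|\le C'\|x-a\|^2$ for a constant $C'$ depending only on $n$ and $G_3$. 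Applying Cauchy--Schwarz finishes the argument: $\veps_a(x)^T(x-a)\le \|\veps_a(x)\|\,\|x-a\|\le C'\|x-a\|^3$, so the claim holds with $C=C'$.

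There is no serious obstacle here, as the statement is essentially Taylor's theorem made explicit. The only points requiring minor care are the bookkeeping of constants through the norm conversions — passing from the $\ell^\infty$/$\ell^1$ bounds that arise naturally from the componentwise remainder to the Euclidean norm used in the final inequality — and confirming that it is precisely the \emph{uniform} boundedness of the third derivatives that makes $C$ independent of the basepoint $a$. This basepoint-independence is the feature that the downstream convergence analysis will rely on when it instantiates $a=x_t$ along the KGD trajectory.
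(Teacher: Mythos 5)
Your proof is correct and follows essentially the same route as the paper's: part (a) is the observation that uniformly bounded Hessian entries force a uniform eigenvalue bound, and part (b) is a componentwise first-order Taylor expansion of $\nabla f$ with the remainder controlled by the uniform third-derivative bounds of Assumption~\rref{assump:reg}. The only cosmetic differences are that the paper uses the integral form of the Taylor remainder where you use the Lagrange form, and it closes the estimate $\veps_a(x)^T(x-a)\leq C\|x-a\|^3$ via an $\ell^1$--$\ell^2$ norm comparison rather than your (arguably cleaner) Cauchy--Schwarz step.
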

	\begin{proof}
	\hfill
		\begin{enumerate}[(a)]
			\item The entries of $H(x)$ are uniformly bounded, hence so are its eigenvalues, which implies the result.
			\item Let $g(x):=\nabla f(x)$ and denote its components $g=[g^1,\dots,g^n]^T$. Each $g^k:\R^n\to \R,~k=1,\dots,n$ and is $C^2$. The first-degree Taylor expansion for a component function $g^k$ about a point $a\in\R^n$ is \cite{driver2004analysis}
			\begin{equation}
				g^k(x) = \sum_{|\alpha|\leq 1} \frac{D^\alpha g^k(a)}{\alpha!}(x-a)^\alpha + \sum_{|\beta|=2}R_\beta(x)(x-a)^\beta;~~~D^\alpha g^k(a)=\frac{\partial^{|\alpha|} g^k}{\partial x_1^{\alpha_1}\cdots\partial x_n^{\alpha_n}}(a)
			\end{equation}
			with the multiindex conventions $\alpha=(\alpha_1,\dots,\alpha_n)$, $|\alpha|=\alpha_1+\cdots+\alpha_n$, $\alpha!=\alpha_1!\cdots\alpha_n!$, and $x^\alpha=x_1^{\alpha_1}\cdots x_n^{\alpha_n}$; and with
			\[
				R_\beta(x) = \frac{|\beta|}{\beta!}\int_0^1 (1-t)^{|\beta|-1}D^\beta g^k(a+t(x-a))\dd t.
			\] Note that 
			\[
				\{D^\alpha g^k(a)\st |\alpha|=1\}=\{\partial_i g^k(a)\st i=1,\dots,n\}~~\text{and}~~\{D^\beta g^k(x)\st |\beta|=2\}=\{\partial^2_{ij}g^k(a)\st i,j=1,\dots,n\}
			\]so for $\beta$ having nonzero components $i,j$ we can write
			\[
				R^k_{ij}(x) := R^k_\beta(x) =\frac{|\beta|}{\beta!}\int_0^1 (1-t)^{|\beta|-1}\partial^2_{ij} g^k(a+t(x-a))\dd t.
			\]Hence we may re-write Taylor expansion in vector form as
			\[
				g^k(x) = g^k(a) + \nabla g^k(a)^T(x-a) + \underbrace{(x-a)^T R^k(x)(x-a)}_{\veps^k_a(x)}.
			\]where $R^k(x)=[R^k_{ij}(x)]_{ij}$ . Hence we can write the vector-valued version of Taylor's expansion by stacking the expressions above component-wise to get
			\[
				g(x) = g(a) + J_g(x)(x-a)+\veps_a(x)
			\]where $\veps_a(x)\in\R^n$ and $J_g$ is the Jacobian matrix of $g$. Since $g(x)\equiv\nabla f(x)$, it is clear that $J_g(x)=H(x)$. We need to show that $\veps_a(x)^T(x-a)\leq C \|x-a\|^3$. To see this, note that 
			\[
				\partial^2_{ij} g^k(x) = \partial^3_{ijk}f(x)<c_0
			\]for some $c_0>0$ and for every $x\implies R^k(x)<c_1 I$ for some $c_1>0$.  Hence consider 
			\begin{align*}
				\veps_a(x)^T(x-a) &= \sum_{k=1}^n \veps^k_a(x)(x-a)_k\\
				&= \sum_{k=1}^n (x-a)^TR^k(x)(x-a)\cdot (x-a)_k\\
				&\leq \sum_{k=1}^n c_1 (x-a)^T(x-a) \cdot (x-a)_k\\
				&= c_1 \|x-a\|^2 \sum_{k=1}^n (x-a)_k\\
				&\leq c_1 \|x-a\|^2 \|x-a\|_1\leq c_1 \|x-a\|^2 \cdot c_2\|x-a\|_2=c_1 c_2 \|x-a\|^3
			\end{align*}
			since $\|\cdot\|_1\leq c_2\|\cdot\|_2$ for some $c_2>0$ by the equivalence of norms in $\R^n$.
		\end{enumerate}
	\end{proof}

	\begin{mylemma}\label{lem:bounded}
		If $E[\|g(x_t;\xi_t)\|^3]\leq G^3$ for $G>0$ then there is a $V>0$ s.t. $E[\|v_t\|^3]\leq V^3$ $\forall t$; moreover $E[\|v_t\|^2]\leq V^2$. 
	\end{mylemma}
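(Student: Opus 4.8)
The plan is to exploit the fact that, in the simplified dynamics \eqrref{eq:kgdsimple}, the update $v_{t+1}=(1-\beta)v_t + \beta g(x_t;\xi_t)$ is a genuine convex combination, since $\beta\in(0,1)$. The map $x\mapsto\|x\|^3$ is convex on $\R^n$, being the composition of the convex norm with the convex, nondecreasing function $s\mapsto s^3$ on $[0,\infty)$. Hence I can apply the convexity inequality \emph{pointwise} to obtain
\[
    \|v_{t+1}\|^3 \leq (1-\beta)\|v_t\|^3 + \beta\|g(x_t;\xi_t)\|^3
\]
for every realization of the noise. This is the crux of the argument, and it holds without any independence or measurability assumptions precisely because it is a deterministic, pointwise bound; no conditioning on the past $\xi_s$ is required.

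Next I would take expectations of this inequality and insert the hypothesis $E[\|g(x_t;\xi_t)\|^3]\leq G^3$, which yields the scalar recursion
\[
    a_{t+1} \leq (1-\beta)a_t + \beta G^3, \qquad a_t := E[\|v_t\|^3].
\]
Setting $b_t:=a_t - G^3$ (so that $G^3$ is the fixed point of the recursion) turns this into $b_{t+1}\leq (1-\beta)b_t$, whence $b_t\leq(1-\beta)^t b_0$ and therefore $a_t\leq G^3 + (1-\beta)^t(a_0-G^3)$. Since $0<1-\beta<1$, the last term is bounded by $\max\{0,\,a_0-G^3\}$, giving the uniform bound $a_t\leq\max\{E[\|v_0\|^3],\,G^3\}$ for all $t$. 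Defining $V^3:=\max\{E[\|v_0\|^3],\,G^3\}$, which is finite because $v_0$ is a fixed initial value, establishes the first claim $E[\|v_t\|^3]\leq V^3$.

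Finally, for the $L^2$ bound I would invoke Lyapunov's inequality (equivalently, Jensen applied to the concave map $s\mapsto s^{2/3}$), which gives $\big(E[\|v_t\|^2]\big)^{1/2}\leq\big(E[\|v_t\|^3]\big)^{1/3}\leq V$, i.e. $E[\|v_t\|^2]\leq V^2$, as required. I do not anticipate a genuine obstacle here: the only points demanding care are verifying that $\|\cdot\|^3$ is convex so that the pointwise estimate is legitimate, and solving the recursion with the correct direction of the inequality. Everything else is elementary, and the argument adapts to the time-varying and matrix-valued cases via Proposition~\rref{prop:reduction} by replacing $\beta$ with its uniform bounds.
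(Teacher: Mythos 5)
Your proof is correct, and it takes a genuinely different and more economical route than the paper's. The paper unrolls the recursion to write $v_{t+1}=(1-\beta)^{t+1}v_0+\beta\sum_{r=0}^t(1-\beta)^{t-r}g(x_r;\xi_r)$, peels off the initial condition with the crude bound $\|x+y\|^3\leq 4\|x\|^3+4\|y\|^3$, renormalizes the geometric weights by $S_t=\sum_{r=0}^t(1-\beta)^r$, and only then applies Jensen's inequality to the normalized sum, ending with a bound of order $(S_\infty)^3G^3$. You instead apply the same convexity of $\|\cdot\|^3$ (correctly justified as a nondecreasing convex function composed with a norm) to the \emph{one-step} convex combination $v_{t+1}=(1-\beta)v_t+\beta g(x_t;\xi_t)$, which reduces the problem to the scalar affine recursion $a_{t+1}\leq(1-\beta)a_t+\beta G^3$ with fixed point $G^3$; your solution of that recursion is valid regardless of the sign of $a_0-G^3$, since multiplying an inequality by the positive number $1-\beta$ preserves it, and the finiteness of $a_0=\|v_0\|^3$ closes the induction. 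What your approach buys is a shorter argument, a tighter constant $V^3=\max\{\|v_0\|^3,G^3\}$, and an immediate extension to time-varying $\beta_t\in(\beta_*,\beta^*)$ (the induction $a_t\leq\max\{a_0,G^3\}\Rightarrow a_{t+1}\leq\max\{a_0,G^3\}$ goes through verbatim); what the paper's unrolled form buys is explicit visibility of $v_t$ as an exponentially weighted average of past gradients, a structure it reuses in later arguments. Your final step via Lyapunov's inequality is exactly the paper's H\"older step, so the $L^2$ conclusion matches.
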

	\begin{proof}
		Note first that $v_{t+1}$ is a dynamical system with state transition $v_s\to v_{t+1}$ given by
		\[
			v_{t+1} = (1-\beta)^{t-s+1}v_s + \beta\sum_{r=s}^t(1-\beta)^{t-r}g(x_r, \xi_r).
		\]Using the triangle inequality and Young's inequality, we can derive $\|x+y\|^3\leq(\|x\| + \|y\|)^3\leq 4\|x\|^3 + 4\|y\|^3$ and applying this to the transition formula above for $s=0$, we get
		\begin{align*}
			\|v_{t+1}\|^3&\leq 4(1-\beta)^{3(t+1)}\|v_0\|^3 + 4\beta^3 \underbrace{\left\|\sum_{r=0}^t (1-\beta)^{t-r}g(x_r;\xi_r)\right\|^3}_{(*)}.
		\end{align*}
		$v_0$ is fixed and deterministic, so the first term has bounded expectation. We analyze $(*)$ as follows: let $S_t = \sum_{r=0}^t (1-\beta)^r$ for $t>0$ and $S_0:=1$. Note that $S_t\to S_\infty<\infty$ since $(1-\beta)<1$. We then have
		\begin{align*}
			\left\|\sum_{r=0}^t (1-\beta)^{t-r}g(x_r;\xi_r)\right\|^3 &= \left\|\sum_{r=0}^t (1-\beta)^{r}g(x_{t-r};\xi_{t-r})\right\|^3\\
			&= \left\|S_t\cdot \frac{1}{S_t}\sum_{r=0}^t (1-\beta)^{r}g(x_{t-r};\xi_{t-r})\right\|^3\\
			&= (S_t)^3 \left\|\sum_{r=0}^t \frac{(1-\beta)^{r}}{S_t}g(x_{t-r};\xi_{t-r})\right\|^3.
		\end{align*}
		The terms $\{(1-\beta)^r/S_t\}_{r=0}^t$ sum to 1 and $\|\cdot\|^3$ is convex so we may apply Jensen's inequality to obtain
		\begin{align*}
			(*)=\left\|\sum_{r=0}^t (1-\beta)^{t-r}g(x_r;\xi_r)\right\|^3& \leq (S_t)^3 \sum_{r=0}^t \frac{(1-\beta)^{r}}{S_t}\left\|g(x_{t-r};\xi_{t-r})\right\|^3\\
			&= (S_t)^2 \sum_{r=0}^t (1-\beta)^{r}\left\|g(x_{t-r};\xi_{t-r})\right\|^3.
		\end{align*}
		Hence taking expectations and using the assumption that $E[\|g(x_s;\xi_s)\|^3]\leq G^3~\forall s\geq 0$ we have
		\begin{align*}
			E[(*)]&\leq (S_t)^2 \sum_{r=0}^t(1-\beta)^r E[\|g(x_{t-r};\xi_{t-r})\|^3]\\
			&\leq (S_t)^2 \cdot G^3\cdot \sum_{r=0}^t(1-\beta)^r\\
			&\leq (S_\infty)^3\cdot G^3<\infty.
		\end{align*}
		The fact that this bound is independent of $t$ implies that this bound is uniform $\forall t$. To conclude, using H\"older's inequality we have that $E[\|v_t\|^2]\leq E[\|v_t\|^3]^{2/3}\leq (V^3)^{2/3}=V^2$.
	\end{proof}

	\begin{myprop}\label{prop:descentdir}
		Assume that Assumption~\rref{assump:reg} and Assumption~\rref{assump:g} hold, and that ${E[\nabla f(x_0)^Tv_0]\geq 0}$. Then $\forall t\geq 0$
		\[
			E[\nabla f(x_t)^Tv_{t+1}]\geq \beta\mu E[\|\nabla f(x_t)\|^2] - \sum_{s=0}^{t-1} (1-\beta)^{t-s}\left[\alpha_{s} H^*V^2 + \alpha_{s}^2 CV^3\right].
		\]
	\end{myprop}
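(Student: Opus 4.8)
The plan is to prove the inequality by induction on $t$, peeling off a single step of the filter recursion in \eqrref{eq:kgdsimple} rather than unrolling $v_{t+1}$ all at once (which would force me to compare $\nabla f(x_t)$ against gradients at every earlier point simultaneously). Write $D_t:=E[\nabla f(x_t)^Tv_{t+1}]$, so the claim is a lower bound on $D_t$ and the inductive hypothesis controls $D_{t-1}=E[\nabla f(x_{t-1})^Tv_t]$. For the base case $t=0$, I would use $v_1=(1-\beta)v_0+\beta g(x_0;\xi_0)$ to split
\[
D_0=(1-\beta)E[\nabla f(x_0)^Tv_0]+\beta E[\nabla f(x_0)^Tg(x_0;\xi_0)];
\]
the first term is nonnegative by the standing hypothesis $E[\nabla f(x_0)^Tv_0]\geq 0$, the second is at least $\beta\mu E[\|\nabla f(x_0)\|^2]$ by Assumption~\rref{assump:g}(a), and the sum in the claim is empty at $t=0$, so the base case is immediate.

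For the inductive step I would again peel off one filter iteration,
\[
\nabla f(x_t)^Tv_{t+1}=(1-\beta)\nabla f(x_t)^Tv_t+\beta\nabla f(x_t)^Tg(x_t;\xi_t),
\]
and take expectations. The aligned term contributes $\beta E[\nabla f(x_t)^Tg(x_t;\xi_t)]\geq\beta\mu E[\|\nabla f(x_t)\|^2]$ directly from Assumption~\rref{assump:g}(a), which is the leading term of the target. Everything then reduces to a lower bound on the cross term $E[\nabla f(x_t)^Tv_t]$ expressed through $D_{t-1}$.

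This cross term is the crux, and it is exactly where the ``not a true descent direction'' difficulty enters: $v_t$ was assembled from gradients at earlier iterates, so I must transport $\nabla f(x_t)$ back to $x_{t-1}$. I would do this with the Taylor expansion of Lemma~\rref{lem:taylor}(b) about $x_{t-1}$, using $x_t-x_{t-1}=-\alpha_{t-1}v_t$:
\[
\nabla f(x_t)=\nabla f(x_{t-1})-\alpha_{t-1}H(x_{t-1})v_t+\veps_{x_{t-1}}(x_t).
\]
Dotting with $v_t$ and taking expectations, the first term gives $D_{t-1}$; the Hessian term is bounded below by $-\alpha_{t-1}H^*V^2$ using $H(x_{t-1})<H^*I$ from Lemma~\rref{lem:taylor}(a) and $E[\|v_t\|^2]\leq V^2$ from Lemma~\rref{lem:bounded}; and the remainder satisfies $\veps_{x_{t-1}}(x_t)^Tv_t=-\alpha_{t-1}^{-1}\veps_{x_{t-1}}(x_t)^T(x_t-x_{t-1})\geq -C\alpha_{t-1}^2\|v_t\|^3$ by Lemma~\rref{lem:taylor}(b), which after expectation is at least $-C\alpha_{t-1}^2V^3$ via the third-moment bound of Lemma~\rref{lem:bounded}. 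Substituting into the recursion gives
\[
D_t\geq(1-\beta)\left[D_{t-1}-\alpha_{t-1}H^*V^2-C\alpha_{t-1}^2V^3\right]+\beta\mu E[\|\nabla f(x_t)\|^2].
\]

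To close the induction I would insert the hypothesis for $D_{t-1}$ and discard its nonnegative $\beta\mu E[\|\nabla f(x_{t-1})\|^2]$ term; the factor $(1-\beta)$ then promotes each weight $(1-\beta)^{t-1-s}$ to $(1-\beta)^{t-s}$ and the two fresh terms form the $s=t-1$ summand, reproducing the claimed geometric sum exactly. I expect the main obstacle to be precisely the control of this cross term — quantifying how the smoothing error accumulates while staying dominated by the $\beta\mu E[\|\nabla f(x_t)\|^2]$ contribution — rather than the bookkeeping. The only subtleties to verify are that the pointwise estimates on $v_t^TH(x_{t-1})v_t$ and $\veps_{x_{t-1}}(x_t)^Tv_t$ hold for every realization (so they survive the expectation) and that Lemma~\rref{lem:bounded} is applicable, i.e. its hypothesis $E[\|g(x_t;\xi_t)\|^3]\leq G^3$ is in force.
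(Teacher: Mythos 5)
Your proposal reproduces the paper's proof essentially step for step: the same induction on $t$, the same one-step peeling of the filter recursion into a $(1-\beta)$ cross term and a $\beta$ aligned term, the same Taylor transport of $\nabla f(x_t)$ back to $x_{t-1}$ via Lemma~\rref{lem:taylor} with the Hessian and remainder terms bounded by $\alpha_{t-1}H^*V^2$ and $C\alpha_{t-1}^2V^3$ through Lemma~\rref{lem:bounded}, and the same reindexing that absorbs the fresh terms as the $s=t-1$ summand. Your sign bookkeeping on $\veps_{x_{t-1}}(x_t)^Tv_t$ is if anything slightly more careful than the paper's, and your closing caveat about needing the third-moment hypothesis of Lemma~\rref{lem:bounded} is a legitimate point the paper also leaves implicit.
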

	\begin{proof}
		We proceed by induction on $t$. For $t=0$:
		\begin{align*}
			E[\nabla f(x_0)^Tv_1] &= E[\nabla f(x_0)^T((1-\beta)v_0 + \beta g(x_0;\xi_0))]\\
			&=(1-\beta) E[\nabla f(x_0)^T v_0] + \beta E[\nabla f(x_0)^T g(x_0;\xi_0)]\\
			&\geq \beta E[\nabla f(x_0)^T g(x_0;\xi_0)]\\
			&\geq \beta\mu E[\|\nabla f(x_0)\|^2]
		\end{align*}
		and the claim holds for $t=0$. Hence assume that the result holds for $t-1$, i.e. that
		\[
			E[\nabla f(x_{t-1})^Tv_{t}]\geq \beta\mu E[\|\nabla f(x_{t-1})\|^2] - \sum_{s=0}^{t-2} (1-\beta)^{t-1-s}\left[\alpha_{s} H^*V^2 + \alpha_{s}^2 CV^3\right].
		\]Then for $t$ we have that
		\begin{align*}
			E[\nabla f(x_t)^Tv_{t+1}]&= E[\nabla f(x_t)^T((1-\beta)v_t + \beta g(x_t; \xi_t))]\\
			&=\underbrace{(1-\beta)E[\nabla f(x_t)^Tv_t]}_{(1)}+\underbrace{\beta E[\nabla f(x_t)^Tg(x_t,\xi_t)]}_{(2)}.
		\end{align*}
		Working with $(1)$ and using the Taylor expansion for $\nabla f$ from Lemma~\rref{lem:taylor} (with $H_t:=H(x_t)$) we have
		\begin{align*}
			(1) &= (1-\beta)E[(\nabla f(x_{t-1})+H_{t-1}(x_t-x_{t-1})+\veps_{x_{t-1}}(x_{t}))^T v_t]\\
			&=(1-\beta)E[(\nabla f(x_{t-1})^Tv_t+(x_t-x_{t-1})^TH_{t-1}^Tv_t+\veps_{x_{t-1}}(x_{t})^Tv_t] \\
			&= (1-\beta)\left\{E[\nabla f(x_{t-1})^Tv_t] -\alpha_{t-1} E[v_t^TH_{t-1}^T v_t] + \alpha_{t-1}^2E[\veps_{x_{t-1}}(x_t)^Tv_t]\right\}\\
			&\geq (1-\beta)\left\{E[\nabla f(x_{t-1})^Tv_t] -\alpha_{t-1} E[v_t^TH_{t-1}^T v_t] - \alpha_{t-1}^2|E[\veps_{x_{t-1}}(x_t)^Tv_t]|\right\}
		\end{align*}
		Lemma~\rref{lem:taylor} implies that $\veps_{x_{t-1}}(x_t)^Tv_t\leq C\|v_t\|^3$, and Lemma~\rref{lem:bounded} implies that $E[\|v_t\|^3]\leq V^3$. Then using Assumption~\rref{assump:g}(a) on $(2)$ and combining with the previous inequality we have
		\begin{align*}
			E[\nabla f(x_t)^Tv_{t+1}]&\geq \beta \mu E[\|\nabla f(x_t)\|^2] +  (1-\beta)\left\{E[\nabla f(x_{t-1})^Tv_t] -\alpha_{t-1} H^*E[\|v_t\|^2] - \alpha_{t-1}^2CE[\|v_t\|^3]\right\} \\
			&\geq  \beta \mu E[\|\nabla f(x_t)\|^2] +  (1-\beta)\left\{E[\nabla f(x_{t-1})^Tv_t] -\alpha_{t-1} H^*V^2 - \alpha_{t-1}^2CV^3\right\} .
		\end{align*}
		Substituting the inductive hypothesis, we have
		\begin{align*}
			E[\nabla f(x_t)^Tv_{t+1}]&\geq  \beta \mu E[\|\nabla f(x_t)\|^2] +  (1-\beta)\left\{\beta\mu E[\|\nabla f(x_{t-1})\|^2] - \sum_{s=0}^{t-2} (1-\beta)^{t-1-s}\left[\alpha_{s} H^*V^2 + \alpha_{s}^2 CV^3\right] \cdots \right.\\
			&\left.\cdots -\alpha_{t-1} H^*V^2 - \alpha_{t-1}^2CV^3\right\}\\
			&= \beta \mu E[\|\nabla f(x_t)\|^2] + (1-\beta)\beta\mu E[\|\nabla f(x_{t-1})\|^2] \cdots \\
			&\cdots - (1-\beta)\sum_{s=0}^{t-2} (1-\beta)^{t-1-s}\left[\alpha_{s} H^*V^2 + \alpha_{s}^2 CV^3\right] -(1-\beta)[\alpha_{t-1} H^*V^2 + \alpha_{t-1}^2CV^3]\\
			&\geq  \beta \mu E[\|\nabla f(x_t)\|^2] - (1-\beta)\sum_{s=0}^{t-2} (1-\beta)^{t-1-s}\left[\alpha_{s} H^*V^2 + \alpha_{s}^2 CV^3\right] \cdots\\
			&\cdots -(1-\beta)[\alpha_{t-1} H^*V^2 + \alpha_{t-1}^2CV^3]\\
			&= \beta \mu E[\|\nabla f(x_t)\|^2] - \sum_{s=0}^{t-2} (1-\beta)^{t-s}\left[\alpha_{s} H^*V^2 + \alpha_{s}^2 CV^3\right] -(1-\beta)[\alpha_{t-1} H^*V^2 + \alpha_{t-1}^2CV^3]\\
			& = \beta \mu E[\|\nabla f(x_t)\|^2] - \sum_{s=0}^{t-1} (1-\beta)^{t-s}\left[\alpha_{s} H^*V^2 + \alpha_{s}^2 CV^3\right]
		\end{align*}
		where we have used first that $(1-\beta)\beta\mu E[\|\nabla f(x_{t-1})\|^2]\geq 0$ to neglect it from the RHS, and then also the fact that $(1-\beta)[\alpha_{t-1} H^*V^2 + \alpha_{t-1}^2CV^3]$ is the $s=t-1$ term in the sum. Hence by induction the claim holds $\forall t\geq 0$.
	\end{proof}

	\begin{myprop}\label{prop:varcomp}
		Under Assumptions~\rref{assump:reg} and~\rref{assump:g}, we have
		\[
			E[\|v_{t+1}\|^2]\leq \wtilde{M} + \wtilde{M_G}E\|\nabla f(x_t)\|^2]
		\]for constants $\wtilde{M},~\wtilde{M}_G>0$. 
	\end{myprop}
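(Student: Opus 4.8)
The plan is to control $\|v_{t+1}\|^2$ directly from the one-step recursion in \eqrref{eq:kgdsimple} and then pass to expectations, bounding the two resulting pieces with the tools already available: the uniform $L^2$ bound on the filtered iterate from Lemma~\rref{lem:bounded} and the growth bound on the raw gradient estimate from Assumption~\rref{assump:g}(b). Concretely, writing $a=(1-\beta)v_t$ and $b=\beta g(x_t;\xi_t)$, the elementary inequality $\|a+b\|^2\leq 2\|a\|^2+2\|b\|^2$ (the same device used in Lemmas~\rref{lem:bounded} and \rref{lem:ctrlobs}) gives
\[
\|v_{t+1}\|^2\leq 2(1-\beta)^2\|v_t\|^2 + 2\beta^2\|g(x_t;\xi_t)\|^2.
\]

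Taking expectations and applying Lemma~\rref{lem:bounded} to the first term, so that $E[\|v_t\|^2]\leq V^2$ uniformly in $t$, and Assumption~\rref{assump:g}(b) to the second, I obtain
\[
E[\|v_{t+1}\|^2]\leq 2(1-\beta)^2 V^2 + 2\beta^2\left(M + M_G E[\|\nabla f(x_t)\|^2]\right).
\]
Collecting the $t$-independent constants then yields the claim with $\wtilde{M}=2(1-\beta)^2 V^2 + 2\beta^2 M$ and $\wtilde{M}_G = 2\beta^2 M_G$, both strictly positive. Note that only the current-time gradient norm $E[\|\nabla f(x_t)\|^2]$ appears, exactly as needed to feed this bound (together with Proposition~\rref{prop:descentdir}) into the Bottou-style descent argument; the older gradients are absorbed into the uniform constant $V^2$.

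The calculation is short, so the only genuine point requiring care is verifying that the hypothesis of Lemma~\rref{lem:bounded}---a uniform third-moment bound $E[\|g(x_t;\xi_t)\|^3]\leq G^3$---is actually in force. Under Assumption~\rref{assump:reg} the first derivatives of $f$ are uniformly bounded, so $\|\nabla f(x)\|\leq G_1$ for all $x$ and hence $E[\|\nabla f(x_t)\|^2]$ is itself uniformly bounded; combined with a moment condition on the noise this supplies the required uniform control of $E[\|v_t\|^2]$. If one prefers to avoid assuming a third moment, the same convexity/Jensen expansion of $v_{t+1}$ used in Lemma~\rref{lem:bounded} can be run with $\|\cdot\|^2$ in place of $\|\cdot\|^3$, combined with the uniform second-moment bound $E[\|g(x_t;\xi_t)\|^2]\leq M+M_G G_1^2$ coming from Assumption~\rref{assump:g}(b), to produce $E[\|v_t\|^2]\leq V^2$ directly. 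Finally, the time-varying scalar and positive-definite matrix versions of the estimate follow with no change by invoking Proposition~\rref{prop:reduction}(a)--(b).
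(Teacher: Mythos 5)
Your proof is correct and follows essentially the same route as the paper's: expand the recursion $v_{t+1}=(1-\beta)v_t+\beta g(x_t;\xi_t)$, apply $\|a+b\|^2\leq 2\|a\|^2+2\|b\|^2$, bound $E[\|v_t\|^2]$ by $V^2$ via Lemma~\rref{lem:bounded} and the gradient term via Assumption~\rref{assump:g}(b), and collect constants. Your added remark on verifying the third-moment hypothesis of Lemma~\rref{lem:bounded} is a reasonable point of care that the paper passes over silently, but it does not alter the argument.
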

	\begin{proof}
		\begin{align*}
			E[\|v_{t+1}\|^2]&=E[\|(1-\beta) v_t + \beta g(x_t,\xi_t)\|^2]\\
			&\leq 2 (1-\beta)^2 E[\|v_t\|^2] + 2\beta^2 E[\|g(x_t,\xi_t)\|^2]\\
			&\leq 2(1-\beta)^2V^2 + 2\beta^2(M + M_G E[\|\nabla f(x_t)\|^2]).
		\end{align*}
	\end{proof}

	\begin{mylemma}\label{lem:ineq}
		Under Assumptions~\rref{assump:reg} and \rref{assump:g}, we have that 
		\begin{align*}
			E[f(x_{t+1}) - f(x_t)]&\leq -(\wtilde{\mu}-\frac{1}{2}\alpha_t L\wtilde{M}_G)\alpha_t \|\nabla f(x_t)\|^2 + \frac{1}{2}\alpha_t^2 L\wtilde{M} +\alpha_t \sum_{s=0}^{t-1} (1-\beta)^{t-s}\left[\alpha_{s} H^*V^2 + \alpha_{s}^2 CV^3\right]
		\end{align*}
		where $\wtilde{\mu}=\beta\mu$, and $L$ is the Lipschitz constant of $\nabla f$.
	\end{mylemma}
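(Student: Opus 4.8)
The plan is to run the standard ``descent lemma'' argument, combining the $L$-smoothness of $f$ with the two moment estimates already established for the filtered direction $v_{t+1}$. First I would invoke Assumption~\rref{assump:reg}, which guarantees $\nabla f$ is $L$-Lipschitz, to obtain the quadratic upper bound $f(x_{t+1})\leq f(x_t) + \nabla f(x_t)^T(x_{t+1}-x_t) + \frac{L}{2}\|x_{t+1}-x_t\|^2$. Substituting the update $x_{t+1}-x_t = -\alpha_t v_{t+1}$ from \eqrref{eq:kgdsimple} turns this into $f(x_{t+1})\leq f(x_t) - \alpha_t \nabla f(x_t)^Tv_{t+1} + \frac{L}{2}\alpha_t^2\|v_{t+1}\|^2$, and taking expectations over the joint law of the $\xi_s$ yields $E[f(x_{t+1})-f(x_t)]\leq -\alpha_t E[\nabla f(x_t)^Tv_{t+1}] + \frac{L}{2}\alpha_t^2 E[\|v_{t+1}\|^2]$.

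The two expectations on the right are exactly what Propositions~\rref{prop:descentdir} and~\rref{prop:varcomp} control. For the cross term I would apply Proposition~\rref{prop:descentdir} (under its mild initialization hypothesis $E[\nabla f(x_0)^Tv_0]\geq 0$); since that result is a \emph{lower} bound on $E[\nabla f(x_t)^Tv_{t+1}]$ and it is multiplied by the negative factor $-\alpha_t$, the inequality reverses and I obtain $-\alpha_t E[\nabla f(x_t)^Tv_{t+1}]\leq -\alpha_t\beta\mu E[\|\nabla f(x_t)\|^2] + \alpha_t\sum_{s=0}^{t-1}(1-\beta)^{t-s}[\alpha_s H^*V^2 + \alpha_s^2 CV^3]$. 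For the quadratic term I would apply Proposition~\rref{prop:varcomp} directly to get $\frac{L}{2}\alpha_t^2 E[\|v_{t+1}\|^2]\leq \frac{L}{2}\alpha_t^2\wtilde{M} + \frac{L}{2}\alpha_t^2\wtilde{M}_G E[\|\nabla f(x_t)\|^2]$.

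Adding the two bounds and collecting the coefficient of $E[\|\nabla f(x_t)\|^2]$ gives $-\alpha_t\beta\mu + \frac{1}{2}\alpha_t^2 L\wtilde{M}_G = -(\beta\mu - \frac{1}{2}\alpha_t L\wtilde{M}_G)\alpha_t$, which is precisely the stated leading term once $\wtilde{\mu}:=\beta\mu$; the constant $\frac{1}{2}\alpha_t^2 L\wtilde{M}$ and the $\alpha_t$-weighted geometric sum then carry through unchanged to give the claim. There is no deep obstacle here: the argument is essentially bookkeeping, and the only points requiring genuine care are the sign reversal when the lower bound of Proposition~\rref{prop:descentdir} is scaled by the negative factor $-\alpha_t$, and keeping the smoothing-error sum --- the one term that departs from the classical argument of \cite{bottou2018optimization} --- attached with its correct $\alpha_t$ prefactor throughout.
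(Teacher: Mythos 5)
Your proposal is correct and follows essentially the same route as the paper: the $L$-smoothness quadratic bound, substitution of $x_{t+1}-x_t=-\alpha_t v_{t+1}$, and then Propositions~\rref{prop:descentdir} and~\rref{prop:varcomp} applied to the cross and quadratic terms respectively, with the sign reversal handled exactly as in the paper's proof. Your explicit flagging of the initialization hypothesis $E[\nabla f(x_0)^Tv_0]\geq 0$ needed for Proposition~\rref{prop:descentdir} is if anything slightly more careful than the paper, which leaves that hypothesis implicit.
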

	\begin{proof}
		First, the Lipschitz property of $\nabla f$ implies
		\[
			f(x)\leq f(x) + \nabla f(y)^T(x-y)+\frac{L}{2}\|x-y\|^2~\forall x,y\in\R^n.
		\]Then using this, we have
		\begin{align*}
			f(x_{t+1}) - f(x_t) &\leq \nabla f(x_t)^T(x_{t+1}-x_t) + \frac{1}{2}L\|x_{t+1}-x_t\|^2\\
			&=-\alpha_t \nabla f(x_t)^Tv_{t+1} + \frac{1}{2}L\alpha_t^2\|v_{t+1}\|^2.
		\end{align*}
		Then taking expectations of both sides and using Propositions \rref{prop:descentdir} and \rref{prop:varcomp} we have
		\begin{align*}
			E[f(x_{t+1}) - f(x_t)] &\leq -\alpha_t E[\nabla f(x_t)^Tv_{t+1}] + \frac{1}{2}\alpha_t^2 E[\|v_{t+1}\|^2]\\
			&\leq -\wtilde{\mu}\alpha_tE[\|\nabla f(x_t)\|^2] +\alpha_t \sum_{s=0}^{t-1} (1-\beta)^{t-s}\left[\alpha_{s} H^*V^2 + \alpha_{s}^2 CV^3\right] + \frac{1}{2}\alpha_t^2L(\wtilde{M} + \wtilde{M_G}E[\|\nabla f(x_t)\|^2])\\
			& = -(\wtilde{\mu}-\frac{1}{2}\alpha_tL \wtilde{M}_G)\alpha_t E[\|\nabla f(x_t)\|^2] + \frac{1}{2}\alpha_{t}^2 L\wtilde{M} +\alpha_t \sum_{s=0}^{t-1} (1-\beta)^{t-s}\left[\alpha_{s} H^*V^2 + \alpha_{s}^2 CV^3\right]
		\end{align*}
	\end{proof}

	\begin{assump}[Same as Assumption~\rref{assump:alphamain}]\label{assump:alpha}
		$\{\alpha_t\}_{t=0}^\infty\subset\R$ is non-increasing, $\alpha_t>0~\forall t$, ${\sum \alpha_t= \infty,~\sum \alpha_t^2<\infty}$.
	\end{assump}

	\begin{mythm}
		Suppose Assumptions~\rref{assump:reg},~\rref{assump:g},~and~\rref{assump:alpha} hold, and that $f_*=\min_{x\in\R^n} f(x)$ exists. Then
		\[
			\lim_{T\to \infty}E\left[\sum_{t=0}^T\|\nabla f(x_t)\|^2\right]<\infty.
		\]
	\end{mythm}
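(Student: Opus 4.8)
The plan is to sum the one-step descent inequality of Lemma~\rref{lem:ineq} over $t=0,\dots,T$, telescope the left-hand side, and bound the resulting right-hand side uniformly in $T$. Writing $\wtilde{\mu}=\beta\mu$ and adding the bounds, the telescoping of $E[f(x_{t+1})-f(x_t)]$ collapses to $E[f(x_{T+1})]-f(x_0)$, which, since $f_*=\min_x f(x)$ exists, is bounded below by $f_*-f(x_0)$. Rearranging gives
\[
\sum_{t=0}^T\Bigl(\wtilde{\mu}-\frac{1}{2}\alpha_t L\wtilde{M}_G\Bigr)\alpha_t\,E[\|\nabla f(x_t)\|^2]\le f(x_0)-f_*+\frac{1}{2}L\wtilde{M}\sum_{t=0}^T\alpha_t^2+\mathcal{E}_T,
\]
where $\mathcal{E}_T=\sum_{t=0}^T\alpha_t\sum_{s=0}^{t-1}(1-\beta)^{t-s}[\alpha_s H^*V^2+\alpha_s^2 CV^3]$ is the accumulated smoothing error. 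Since the summands $\|\nabla f(x_t)\|^2$ are nonnegative and $E[\sum_{t\le T}\|\nabla f(x_t)\|^2]=\sum_{t\le T}E[\|\nabla f(x_t)\|^2]$ is nondecreasing in $T$, it suffices to produce a finite bound uniform in $T$.

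The term $\frac{1}{2}L\wtilde{M}\sum_t\alpha_t^2$ is finite directly by Assumption~\rref{assump:alpha}. The interesting quantity is $\mathcal{E}_T$, which I would control by setting $\rho:=1-\beta\in(0,1)$ and interchanging the order of summation. For the linear-in-$\alpha_s$ part I use $\alpha_t\alpha_s\le\frac{1}{2}(\alpha_t^2+\alpha_s^2)$, and for the quadratic part I use that $\{\alpha_t\}$ is non-increasing, so $\alpha_t\alpha_s^2\le\alpha_0\alpha_s^2$. After swapping sums, every contribution collapses to either $\sum_t\alpha_t^2\sum_{k\ge1}\rho^{k}$ or $\sum_s\alpha_s^2\sum_{k\ge1}\rho^{k}$, each equal to $\frac{\rho}{1-\rho}\sum\alpha^2<\infty$. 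Hence $\sup_T\mathcal{E}_T<\infty$, so the entire right-hand side is bounded by a constant $B$ independent of $T$.

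The main obstacle is the stepsize factor $\alpha_t$ that Lemma~\rref{lem:ineq} attaches to $\|\nabla f(x_t)\|^2$. Because $\sum\alpha_t^2<\infty$ forces $\alpha_t\to0$, there is $T_0$ with $\wtilde{\mu}-\frac{1}{2}\alpha_t L\wtilde{M}_G\ge\wtilde{\mu}/2$ for $t\ge T_0$, and the displayed bound then controls $\frac{\wtilde{\mu}}{2}\sum_{t\ge T_0}\alpha_t E[\|\nabla f(x_t)\|^2]$, i.e. it yields the \emph{stepsize-weighted} series $\sum_t\alpha_t E[\|\nabla f(x_t)\|^2]<\infty$. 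Passing from this to the displayed \emph{unweighted} series requires removing the weight, which the $O(\alpha_t)$ coefficient cannot supply since $\alpha_t\to0$; this is the crux of the statement. I would address it in one of two ways: (i) impose an additional Polyak--{\L}ojasiewicz or strong-convexity condition on $f$, which bootstraps the weighted bound into mean-square decay of $E[\|\nabla f(x_t)\|^2]$ and hence genuine summability; or (ii) interpret the conclusion with the $\alpha_t$ weight that Lemma~\rref{lem:ineq} intrinsically produces, under which the telescoping argument above is complete and which is exactly the form consumed downstream (together with $\sum\alpha_t=\infty$) to obtain $\liminf_{t}E[\|\nabla f(x_t)\|^2]=0$ in Theorem~\rref{thm:kgdconv}.
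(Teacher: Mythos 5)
Your proposal is correct and follows essentially the same route as the paper's own proof: sum Lemma~\rref{lem:ineq} over $t=0,\dots,T$, telescope the left side against $f_*\leq E[f(x_{T+1})]$, absorb the $\frac{1}{2}L\wtilde{M}\sum_t\alpha_t^2$ term via Assumption~\rref{assump:alpha}, and control the accumulated smoothing error by exchanging the order of summation and summing the geometric tail in $(1-\beta)$. The only cosmetic differences are that you bound the cross term via $\alpha_t\alpha_s\leq\frac{1}{2}(\alpha_t^2+\alpha_s^2)$ where the paper uses monotonicity ($\alpha_t\leq\alpha_s$ for $s\leq t$) after the swap --- both give an $O(\sum_s\alpha_s^2)$ bound, and yours does not even need monotonicity for that term --- and that you handle the coefficient $\wtilde{\mu}-\frac{1}{2}\alpha_tL\wtilde{M}_G$ with a tail index $T_0$ where the paper assumes WLOG $\alpha_0L\wtilde{M}_G\leq\wtilde{\mu}$; these are equivalent. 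More importantly, your diagnosis of the weighted-versus-unweighted issue is exactly right: the paper's own argument establishes only the stepsize-weighted bound $\sum_{t=0}^T\alpha_t E[\|\nabla f(x_t)\|^2]\leq\text{const}$ uniformly in $T$ and stops there, so the displayed conclusion $\lim_{T\to\infty}E\bigl[\sum_{t=0}^T\|\nabla f(x_t)\|^2\bigr]<\infty$ is evidently a misprint with the $\alpha_t$ weight dropped; since $\alpha_t\to 0$, the unweighted series genuinely does not follow from this argument, as you note. Corollary~\rref{coro:result} consumes precisely the weighted series together with $\sum\alpha_t=\infty$ to conclude $\liminf_t E[\|\nabla f(x_t)\|^2]=0$, which is your option (ii); no additional Polyak--{\L}ojasiewicz or strong-convexity hypothesis (your option (i)) is needed once the statement is read with the $\alpha_t$ weight restored.
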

	\begin{proof}
		Assume WLOG that $\alpha_0L \wtilde{M}_G \leq \wtilde{\mu} $. Then using Lemma~\rref{lem:ineq}
		\begin{align*}
			&f_*-E[f(x_0)] \leq E[f(x_{T+1})] - E[f(x_0)] = \sum_{t=0}^T (E[f(x_{t+1})]-E[f(x_t)])\\
			&\leq \sum_{t=0}^T \left\{-(\wtilde{\mu}-\frac{1}{2}\alpha_t L\wtilde{M}_G)\alpha_t \|\nabla f(x_t)\|^2 + \frac{1}{2}\alpha_t^2 L\wtilde{M} +\alpha_t \sum_{s=0}^{t-1} (1-\beta)^{t-s}\left[\alpha_{s} H^*V^2 + \alpha_{s}^2 CV^3\right]\right\}\\
			&\leq \sum_{t=0}^T \left\{-\frac{1}{2}\wtilde{\mu}\alpha_t E[\|\nabla f(x_t)\|^2]+ \frac{1}{2}\alpha_t^2 L\wtilde{M} +\alpha_t \sum_{s=0}^{t-1} (1-\beta)^{t-s}\left[\alpha_{s} H^*V^2 + \alpha_{s}^2 CV^3\right]\right\}\\
		\end{align*}
		hence 
		\[
			\sum_{t=0}^T \alpha_t E[\|\nabla f(x_t)\|^2]\leq \frac{2(E[f(x_0)]-f_*)}{\wtilde{\mu}} + \frac{L\wtilde{M}}{\wtilde{\mu}}\sum_{t=0}^T\alpha_t^2 + \frac{2}{\wtilde{\mu}}\sum_{t=0}^T \alpha_t \sum_{s=0}^{t-1} (1-\beta)^{t-s}\left[\alpha_{s} H^*V^2 + \alpha_{s}^2 CV^3\right].
		\]Hence we need to show that the last term is summable as $T\to \infty$. 
		Since $\alpha_s^2\to 0$ faster than $\alpha_s$, it is sufficient to show that the term containing $\alpha_s$ is summable. Hence we show that
		\[
			\lim_{T\to \infty}\sum_{t=1}^T \sum_{s=0}^{t-1} \alpha_t \alpha_s(1-\beta)^{t-s}<\infty
		\] where the $t=0$ term is zero since it is the empty sum. Let us exchange the order of summation
		\[
			\sum_{t=1}^T \sum_{s=0}^{t-1} \alpha_t\alpha_s (1-\beta)^{t-s}= \sum_{s=0}^{T-1} \sum_{t=s+1}^T\alpha_s\alpha_t (1-\beta)^{t-s}.
		\]Then using the fact that $\alpha_t$ is decreasing $\implies \alpha_s\geq \alpha_t$ for $s\leq t$, we have
		\begin{align*}
			\sum_{s=0}^{T-1} \sum_{t=s+1}^T\alpha_s\alpha_t (1-\beta)^{t-s}&\leq \sum_{s=0}^{T-1} \alpha_s^2 \sum_{t=s+1}^T (1-\beta)^{t-s}\\
			&= \sum_{s=0}^{T-1} \alpha_s^2 \sum_{t=0}^{T-s-1} (1-\beta)^{t+1}\\
			&=\frac{1-\beta}{\beta}\sum_{s=0}^{T-1} \alpha_s^2 \cdot [1-(1-\beta)^{T-s-1}]\\
			&\leq \frac{1-\beta}{\beta} \cdot C\sum_{s=0}^{T-1} \alpha_s^2 < \infty\text{ as }T\to \infty.
		\end{align*}
	\end{proof}

	\begin{mycoro}\label{coro:result}
		Under Assumptions~\rref{assump:reg},~\rref{assump:g}, and~\rref{assump:alpha}, we have
		\[
			\liminf_{t\to \infty}E[\|\nabla f(x_t)\|^2]=0.
		\]
	\end{mycoro}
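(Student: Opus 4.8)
The plan is to combine the convergent weighted series produced by the preceding theorem with the divergence of $\sum_t \alpha_t$ guaranteed by Assumption~\ref{assump:alpha}, in the standard Robbins--Monro fashion. Concretely, the proof of the preceding theorem does not merely bound an unweighted sum; it actually establishes the $\alpha_t$-weighted estimate
\[
	\sum_{t=0}^\infty \alpha_t E[\|\nabla f(x_t)\|^2] < \infty,
\]
which is the quantity I would carry forward. I would therefore begin by recording this bound as the starting point, noting that it is finite because the right-hand side assembled in that proof consists of the initial gap $2(E[f(x_0)]-f_*)/\wtilde{\mu}$ plus terms controlled by $\sum \alpha_t^2 < \infty$.

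Next I would argue by contradiction. Suppose the conclusion fails, so that $\liminf_{t\to\infty} E[\|\nabla f(x_t)\|^2] = \varepsilon$ for some $\varepsilon > 0$. By the definition of $\liminf$ there is an $N$ such that $E[\|\nabla f(x_t)\|^2] \geq \varepsilon/2$ for all $t \geq N$. Multiplying by $\alpha_t \geq 0$ and summing the tail gives
\[
	\sum_{t=N}^\infty \alpha_t E[\|\nabla f(x_t)\|^2] \geq \frac{\varepsilon}{2} \sum_{t=N}^\infty \alpha_t = \infty,
\]
since $\sum \alpha_t = \infty$ by Assumption~\ref{assump:alpha} and deleting the finitely many terms with $t < N$ does not affect divergence. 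This contradicts the finiteness of the weighted series from the first step, forcing $\varepsilon = 0$. Because $E[\|\nabla f(x_t)\|^2]$ is nonnegative, its $\liminf$ cannot be negative either, so $\liminf_{t\to\infty} E[\|\nabla f(x_t)\|^2] = 0$ as claimed.

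There is little genuine difficulty here; the argument is routine once the correct input is isolated. The one point worth flagging is that the usable bound is the \emph{$\alpha_t$-weighted} series rather than a bare sum of $E[\|\nabla f(x_t)\|^2]$: it is precisely the tension between $\sum_t \alpha_t E[\|\nabla f(x_t)\|^2] < \infty$ and $\sum_t \alpha_t = \infty$ that delivers the $\liminf$ conclusion. Since the weights $\alpha_t$ may be small exactly where the expected squared gradient is large, one cannot in general upgrade this to $\lim_{t\to\infty} E[\|\nabla f(x_t)\|^2] = 0$ without further hypotheses, and so $\liminf$ is the sharp statement obtainable from this machinery.
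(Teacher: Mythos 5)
Your proof is correct and is essentially the paper's own argument: both derive a contradiction between the finiteness of $\sum_t \alpha_t E[\|\nabla f(x_t)\|^2]$ (the quantity actually established in the preceding theorem's proof, as you rightly flag) and the divergence of $\sum_t \alpha_t$. If anything, your version is slightly more careful than the paper's, which asserts the lower bound $E[\|\nabla f(x_t)\|^2]>\epsilon$ for all $t\geq 0$ rather than only for a tail $t\geq N$ as the definition of $\liminf$ actually warrants.
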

	\begin{proof}
		Following \cite{bertsekas2000gradient}, if not, then $\exists\epsilon>0$ s.t. $E[\|\nabla f(x_t)\|^2]>\epsilon$ $\forall t\geq 0$. But then
		\[
			\sum_{t=0}^t \alpha_tE[\|\nabla f(x_t)\|^2]\geq \epsilon\sum_{t=0}^T\alpha_t=\infty,
		\]contradicting the preceding Theorem.
	\end{proof}

	\subsection{Miscellaneous Proofs}\label{app:misc}
	Consider the ``heavy ball'' momentum update \cite{qian1999momentum} 
	\begin{equation}\label{eq:hb}
		\rl{
			u_{t+1} &= \beta u_t -\alpha \nabla f(x_t)\\
			x_{t+1} &= x_t + u_{t+1}
		}
	\end{equation}
	and the exponentially smoothed gradient descent update which is a simplified version of KGD
	\begin{equation}\label{eq:ema}
		\rl{
			v_{t+1} &= (1-\beta)v_t + \beta \nabla f(x_t)\\
			x_{t+1} &= x_t - \alpha v_{t+1}.
		}
	\end{equation}
	where $0<\beta<1$,~$\alpha>0$. We claim that these two methods are not via a change of parameters.
	\begin{myprop}
		There is are no parameters $\wtilde{\alpha}>0,~0<\wtilde{\beta}<1$ for \eqrref{eq:ema} which will transform \eqrref{eq:ema} into \eqrref{eq:hb}.
	\end{myprop}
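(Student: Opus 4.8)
The plan is to eliminate the auxiliary velocity variable from each scheme, reduce both to a single second-order recurrence in $x$, and then ask whether the two reduced updates can be made to coincide for admissible $\wtilde{\alpha}>0$ and $0<\wtilde{\beta}<1$. For \eqrref{eq:hb}, substituting $u_t = x_t - x_{t-1}$ gives $x_{t+1} = x_t + \beta(x_t - x_{t-1}) - \alpha\nabla f(x_t)$; for \eqrref{eq:ema}, substituting $v_t = (x_{t-1}-x_t)/\wtilde{\alpha}$ gives $x_{t+1} = x_t + (1-\wtilde{\beta})(x_t - x_{t-1}) - \wtilde{\alpha}\wtilde{\beta}\,\nabla f(x_t)$. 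Both now sit in the canonical heavy-ball form $x_{t+1} = x_t + m(x_t-x_{t-1}) - \eta\,\nabla f(x_t)$, with $(m,\eta)=(\beta,\alpha)$ in the first case and $(m,\eta)=(1-\wtilde{\beta},\wtilde{\alpha}\wtilde{\beta})$ in the second.

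I would first record what it takes for the two schemes to produce the \emph{same} iterate sequence $\{x_t\}$ for every $f$: equating the momentum and gradient coefficients gives $1-\wtilde{\beta}=\beta$ and $\wtilde{\alpha}\wtilde{\beta}=\alpha$. The key observation — and the reason the statement needs a careful reading — is that this pair is solvable inside the admissible region, by $\wtilde{\beta}=1-\beta\in(0,1)$ and $\wtilde{\alpha}=\alpha/(1-\beta)>0$. Consequently the proposition cannot be asserting mere equality of the $x$-trajectories; its content must be the stronger requirement that some choice of parameters turns the \emph{system} of update equations \eqrref{eq:ema} into the \emph{system} \eqrref{eq:hb} with the velocity variable carried over unscaled.

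Under that stronger reading the proof is immediate. Comparing the position lines forces the velocity coefficient $-\wtilde{\alpha}$ in \eqrref{eq:ema} to equal the coefficient $+1$ in \eqrref{eq:hb}, i.e. $\wtilde{\alpha}=-1$; comparing the velocity lines forces the gradient coefficient $\wtilde{\beta}$ in \eqrref{eq:ema} to equal $-\alpha$ in \eqrref{eq:hb}, i.e. $\wtilde{\beta}=-\alpha<0$. Either identity violates the sign constraints, so no admissible $(\wtilde{\alpha},\wtilde{\beta})$ exists. I would present this orientation mismatch — the gradient being subtracted through a positively weighted running average in \eqrref{eq:ema}, versus added through a negatively weighted one in \eqrref{eq:hb} — as the conclusion.

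The main obstacle here is not computational but definitional: one must fix precisely what ``transform into'' means and state explicitly that the velocity variable may not be rescaled. This matters because the reparametrization $u_t = -\wtilde{\alpha}\,v_t$ reconciles the signs and recovers exactly the equivalence found in the first step. The careful step is therefore to declare the intended notion of equivalence at the outset and to note that the obstruction is genuine only relative to that notion; once it is fixed, the sign argument closes the proof in one line.
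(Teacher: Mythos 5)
Your proposal is correct, and it takes a genuinely different — and more revealing — route than the paper. The paper substitutes the velocity update into the position update to get a single line for each method, then compares coefficients while implicitly identifying $u_t$ with $v_t$; the single obstruction it finds is $\beta=-\wtilde{\alpha}(1-\wtilde{\beta})<0$. You instead eliminate the auxiliary variable entirely, reducing both schemes to the second-order recurrence $x_{t+1}=x_t+m(x_t-x_{t-1})-\eta\nabla f(x_t)$, and observe that the two $x$-trajectories coincide for the admissible choice $\wtilde{\beta}=1-\beta\in(0,1)$, $\wtilde{\alpha}=\alpha/(1-\beta)>0$ (equivalently, the systems are conjugate under the velocity rescaling $u_t=-\wtilde{\alpha}v_t$). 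This is an important point the paper's proof conceals: the inequivalence asserted by the proposition holds only under the strict reading that the velocity variable may not be rescaled, which is exactly the reading the paper's ``comparing like terms'' step silently adopts. Under that reading your sign argument (forcing $\wtilde{\alpha}=-1$ and $\wtilde{\beta}=-\alpha$) closes the proof just as the paper's does. What your approach buys is a sharper statement of what is actually true — the constant-coefficient, constant-stepsize version of KGD \emph{does} generate the same iterates as heavy ball after reparametrization — which somewhat undercuts the remark in Section~2 that motivates this proposition; the genuine distinction between KGD and heavy ball must therefore rest on the time-varying, adaptive nature of $\wtilde{K}_t$ and $\alpha_t$, not on the constant-scalar case treated here.
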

	\begin{proof}
		Suppose the contrary. Then we would have
		\[
			\rl{
				x_{t+1} &= x_t + \beta u_t -\alpha \nabla f(x_t)\\
				x_{t+1} &= x_t -\wtilde{\alpha}((1-\wtilde{\beta})v_t +\wtilde{\beta}\nabla f(x_t))
			}\implies \rl{
				\beta &= -\wtilde{\alpha}(1-\wtilde{\beta})\\
				\alpha &= \wtilde{\alpha}\wtilde{\beta}
			}
		\]by comparing like terms. The condition on $\beta$ is clearly impossible, hence we have a contradiction.
	\end{proof}
	\newpage
	\section{Additional Imagery}\label{app:imagery}
	\begin{figure}[ht!]
		\centering
		\begin{tabular}{c c}
			\includegraphics[scale=0.4]{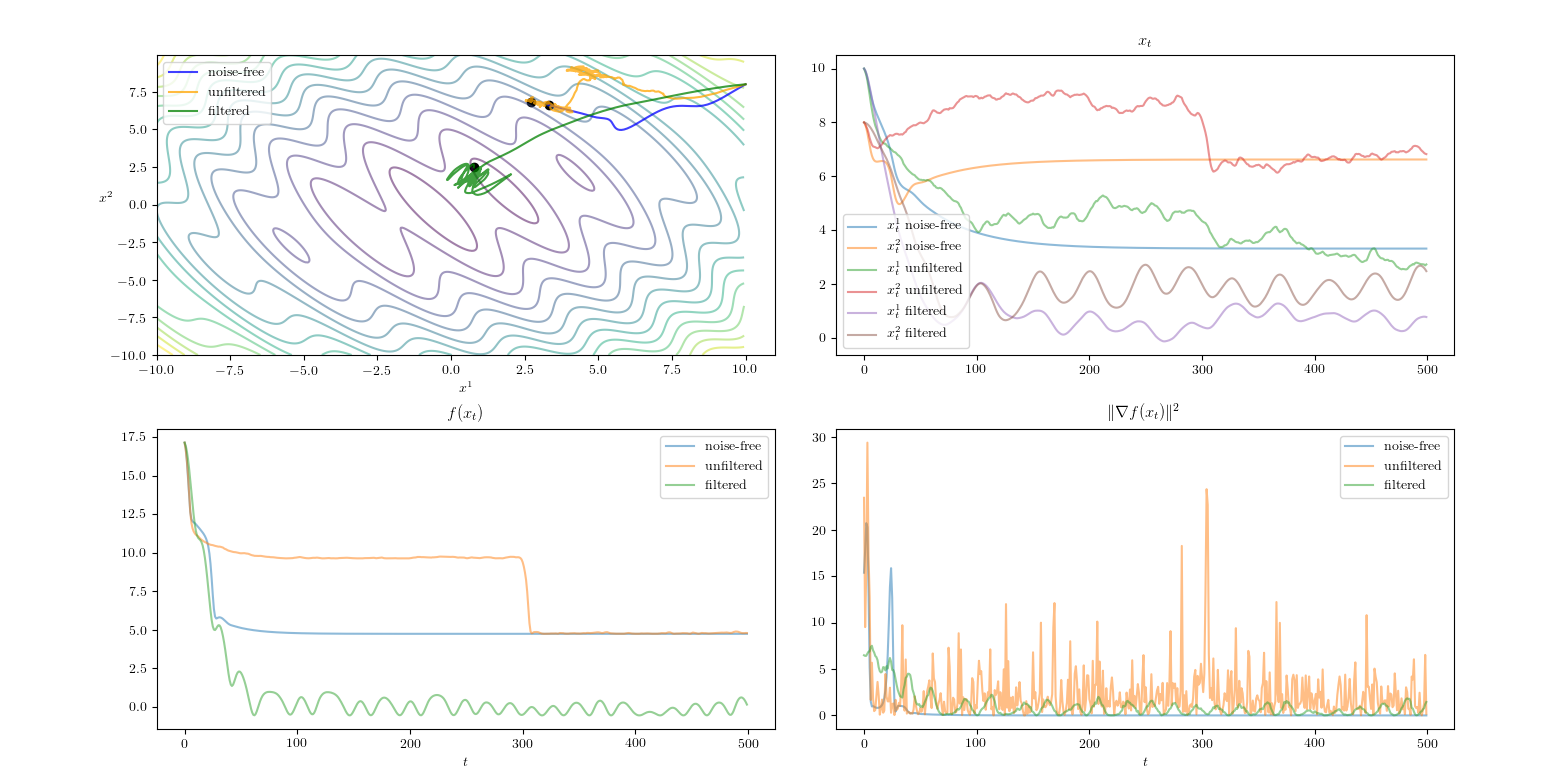}\\
			\includegraphics[scale=0.4]{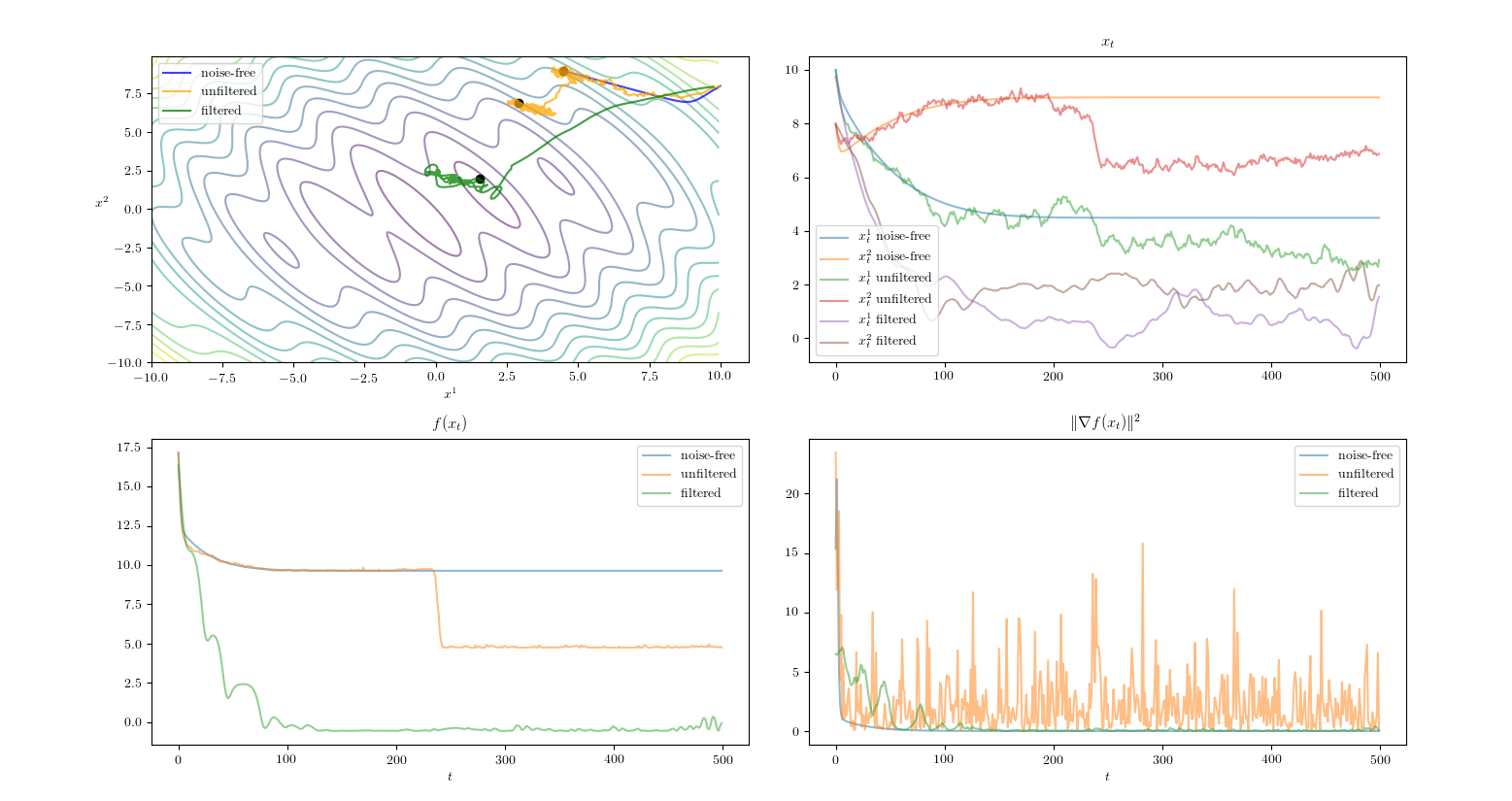}
		\end{tabular}
		\caption{Additional tests for filtered stochastic minimization from Section~\rref{sec:exp1}.}
		\label{fig:addtl}
	\end{figure}

\end{document}